\documentclass[conference]{IEEEtran}
\IEEEoverridecommandlockouts

\usepackage[utf8]{inputenc}
\usepackage[T1]{fontenc}

\usepackage{booktabs} %
\usepackage{etoolbox} %
\usepackage{subfig}
\usepackage{bbm}
\usepackage{grffile}
\usepackage{url}
\usepackage{amsmath}
\usepackage{amsfonts}
\usepackage{amsthm}
\usepackage{xspace}
\usepackage{enumitem}
\usepackage{mathtools}
\usepackage[ruled,resetcount,noend,linesnumbered]{algorithm2e}
\usepackage{comment}
\usepackage{tikz}
\usepackage{enumitem}
\usepackage{tablefootnote}
\usepackage[flushleft]{threeparttable} %
\usepackage{multirow}
\usepackage{multicol}
\usepackage{arydshln}
\usepackage{blindtext}
\usepackage{balance}

\definecolor{pastelblue}{RGB}{76,113,175}
\definecolor{pastelgreen}{RGB}{84,167,104}
\definecolor{pastelred}{RGB}{196,78,82}
\definecolor{pastelgrey}{RGB}{230,230,230}
\definecolor{pastelbeige}{RGB}{243,236,221}
\definecolor{pastelpurple}{RGB}{154,139,192}
\definecolor{mysalmon}{RGB}{250, 128, 114}
\definecolor{myblue}{RGB}{60,105,210}
\definecolor{mygreen}{RGB}{60,179,113}

\newcommand\Mark[1]{\textsuperscript{#1}}
\newcommand{\mypm}{\mathbin{\mathpalette\@mypm\relax}}

\newcommand{\sS}{\mathcal{S}}
\newcommand{\bW}{{\bf W}}

\newcommand{\eat}[1]{}
\newtheorem{definition}{Definition}
\newtheorem{lemma}{Lemma}
\newtheorem{theorem}{Theorem}

\AfterEndEnvironment{tabular}{\tabularendstuff}
\newcommand{\tabularendstuff}{}

\newcommand{\name}{R-GPM\xspace} %

\def\BibTeX{{\rm B\kern-.05em{\sc i\kern-.025em b}\kern-.08em
    T\kern-.1667em\lower.7ex\hbox{E}\kern-.125emX}}

\begin{document}
\title{Graph Pattern Mining and Learning through \\User-defined Relations (Extended Version)
  \\[.75ex] 
  {\normalfont\large 
Carlos H. C. Teixeira\Mark{*},
Leonardo Cotta\Mark{\#},
Bruno Ribeiro\Mark{\#},
Wagner Meira Jr.\Mark{*}
  }
  \\[-1.5ex]
  }
\author{
\IEEEauthorblockA{
\Mark{*}\textit{Department of Computer Science} \\
\textit{Universidade Federal de Minas Gerais}\\
Belo Horizonte, Brazil \\
\{carlos,meira\}@dcc.ufmg.br} \and
\IEEEauthorblockA{
\Mark{\#}\textit{Department of Computer Science} \\
\textit{Purdue University}\\
West Lafayette, IN \\
\{cotta,ribeirob\}@purdue.edu}
}
\maketitle

\begin{abstract}
In this work, 
we propose \name, a parallel computing framework 
for graph pattern mining (GPM) through a user-defined subgraph relation.
More specifically, we enable the computation of pattern statistics 
through their subgraph classes, generalizing traditional GPM methods.
\name provides efficient estimators for these statistics 
by employing an MCMC sampling algorithm combined with several optimizations. 
We provide both theoretical guarantees and empirical evaluations of our estimators in application scenarios such as stochastic optimization of 
higher-order graph neural network models and pattern (motif) 
counting. %
We also propose and evaluate optimizations 
that enable improvements 
of our estimators accuracy, while reducing their
computational costs in up to 3-orders-of-magnitude. 
Finally, we show that \name is scalable, providing near-linear speedups on 44 cores
in all of our tests.

\end{abstract}

\section{Introduction}
\label{sec:intro}

\newcommand{\CIS}{CIS\xspace}
\newcommand{\CISes}{CISes\xspace}
\newcommand{\IS}{IS\xspace}
\newcommand{\ISes}{ISes\xspace}

Graph pattern mining (GPM) consists of 
finding \emph{relevant} patterns in labeled graphs (networks\footnote{Throughout this work we will use the terms {\em graph} and {\em network} interchangeably.}).
A pattern is a template of subgraphs, say, two females and two males in a social network connected as a fully connected 4-node subgraph.
Further, the relevance of a pattern is given by the properties of 
its occurrences in the graph.
For example, in frequent subgraph mining,
 a subgraph template is considered relevant 
if it occurs frequently in the network, i.e., 
the relevance criterion is based on the popularity of the graph patterns~\cite{kuramochi2005finding,preti2018beyond}.%

In the last decades, a variety of GPM methods have emerged with the purpose to speedup
existing
algorithms~\cite{jiang2013survey,buzmakov2017efficient}.
These techniques are used in several high-impact applications, such as label
and link prediction~\cite{Meng2018} and the analysis of biological
networks~\cite{alon-network-motifs}, semantic (RDF)
graphs~\cite{Choi:2009}, citation and social
networks~\cite{percolation,Benson2016}. More recently, there has
been growing interest in specific properties of these patterns~\cite{corescope,
koutra2015summarizing}.

Despite the huge interest in GPM %
applications, 
existing methods are usually restricted to specific tasks.
Moreover, designing an efficient algorithm 
for a given GPM task is usually hard,
since even reasonably-sized real-world networks ($>10$k nodes) 
tend to have a massive number of $k$-node subgraphs ($k \geq 4$),
which may let the mining process impracticable. 

This work generalizes  GPM tasks through user-defined local \emph{subgraph relations}, 
and introduces an efficient sampling algorithm to estimate user-defined subgraph statistics over 
the subgraph classes that arise from these relationship definitions.
Let  $\sS^{(k)}$ be the set of all $k$-node induced subgraphs of $G$.
A subgraph relation R splits  $\sS^{(k)}$ into partitions or \emph{subgraph classes}, 
where a pair of subgraphs $S, S' \in \sS^{(k)}$ belongs to the same {\em subgraph class} iff they have relationship R.
Our task is to compute the relevance of a pattern as a function of its subgraph classes in $G$.

From the practitioner's point of view, relations provide strong advantages over traditional methods:
(a) a novel pattern analysis task is defined by simply setting a new subgraph relation;
(b) relations are flexible and several subgraph relations may be developed by a user.
For instance, a relation may consider the spatial location of the subgraphs, the attributes of their nodes and edges, or even complex networks metrics (e.g., centrality, clustering coefficient, etc);
(c) subgraph relations organize $\sS^{(k)}$ into partitions, which may be used to understand and interpret the relevant patterns reported by the algorithm. 
  Finally, (d) they generalize the existing GPM solutions since each subgraph can be  considered as a class itself, i.e., $S\mathrm{R}S'$ is true iff $S = S'$.

To the best of our knowledge, there is no efficient algorithms able to compute statistics of 
patterns based on user-defined subgraph relations.
 A na\"ive method 
 would have to (1) enumerate the subgraph set $\sS^{(k)}$  and, (2) doubly-iterate over $\sS^{(k)}$ testing $S$R$S'$ for all pairs of $S,S' \in \sS^{(k)}$, 
which is computationally intractable even in moderate-sized graphs.

\noindent
{\bf Contributions:}
This paper introduces a generalized GPM task and an efficient and parallel 
sampling framework, \name, to estimate relevance criteria for a large family of user-defined relations and summarization functions.
Our method computes statistics of the subgraph classes 
by integrating a computationally bounded exact algorithm
with an unbiased estimator based on random walks, 
through a novel use of the renewal-reward theorem of Markov chains.
More specifically, \name is able to take advantage 
of an incomplete subgraph class' 
computation to improve the estimator accuracy in two complementary ways: 
(a) by only estimating the residual that has not been computed exactly, and 
(b) by parallelizing and reducing the variance associated with random walk sampling using 
the subgraphs of the exact computation as a stopping set for the random walk.
%
%

%

%
We also introduce a host of innovations in random walk sampling for subgraph relevance score estimation, 
such as non-backtracking random walks, data structure optimizations and parallel unbiased estimators using random walks.
In particular, \name adopts a \emph{producer-consumer} parallelization model which provides 
a near-linear speedup on 44 cores.
Moreover, we show that subgraph relations can be useful in
 tasks ranging from a Robbins-Monro~\cite{Robbins51} stochastic optimization method to train deep neural networks for subgraph evolution prediction,
 to a generalization of $k$-clique percolation using any $k$-connected subgraph~\cite{PalEtAl05}.

\noindent
{\bf Reproducibility:}
Our open-sourced code and the data we
used are at \url{http://github.com/dccspeed/rgpm}.

\section{Preliminaries}\label{s:pre}

GPM problems take an attributed undirected graph $G = (V,E,\Phi)$,
where $V$ (also denoted as $V(G)$) is defined as the set of vertices, $E$ is the set of edges ($E(G)$) and $\Phi(v,G)$ is the label of node $v \in V$ in $G$. 
Let $S$ be a \emph{connected and induced subgraph} (\CIS) in $G$. 
$S$ is induced if it has a set of nodes $V(S) \subset V(G)$ and a set of edges
$E(S)$ containing all edges in $E(G)$ that have both endpoints in $V(S)$.
In addition, $S$ is connected when there is a path between any pair of nodes in $V(S)$ formed by the edges in $E(S)$. 
The subgraphs used in this work are all \CISes. 

Roughly speaking, a pattern is a graph template.
We define a pattern (or canonical representation)
of a subgraph $S$, $\rho(S)$, as its canonical labelling code~\cite{junttila2007engineering}.
Therefore, if two subgraphs $S$ and $S'$ are \emph{isomorphic}, then, $\rho(S) = \rho(S')$.
Note that our approach is not tied to this labelling
and other representation form can be used (e.g.~\cite{kuramochi2001frequent,huan2003efficient}).

\noindent
{\bf Problem statement:}
Given a user-defined relation R, computing the pattern relevance
score demands the instantiation of three functions: (i) $g$, the subgraph
function; (ii) $\alpha$, the class weight function; and (iii) $F$, the pattern statistic function. 
$g$ is an arbitrary user-defined function ($|g(\cdot)| < \infty$) and, 
intuitively, it quantifies the contribution of a subgraph to its pattern's score. 
As we will see, $g$ is the basis of both $\alpha$ and $F$ functions. 

The first part of our problem is to compute the {\em class weight function}, $\alpha(\cdot)$,  for 
the class of given subgraph $S$,
 iterating over all \CISes in $\sS^{(k)}$ as follow:
\begin{equation}\label{eq:alpha}
\alpha(\mathrm{R}, S) = \sum_{S' \in \sS^{(k)}}  g(S') \cdot {\bf 1}_{\{S'\mathrm{R}S\}},
\end{equation}
where   
${\bf 1}_{\{S'\mathrm{R}S\}} = 1$ iff subgraphs $S',S \in \sS^{(k)}$ have relationship R, otherwise ${\bf 1}_{\{S'\mathrm{R}S\}} = 0$.

Finally, the last part of our problem is to compute the relevance score of a pattern $P$ in a graph $G$.
This is given by the {\em pattern statistic function}, $F$, which is given by:
\begin{equation}\label{eq:Fgeneral}
F(G, \mathrm{R}, P) = \frac{1}{\lambda} 
\sum_{{S \in \sS^{(k)}}}
\frac{\alpha(\mathrm{R},S)}{|C_{\mathrm{R},S}|}\cdot {\bf 1}_{\{\rho(S) = P\}}, 
\end{equation}
where $\rho(S)$ is a function that gives the pattern of subgraph $S$,
 $C_{\mathrm{R}, S}$ is the set of subgraphs related to $S$ or, formally, 
$C_{\mathrm{R},S} = \{ S' \in \sS^{(k)} | 1_{\{S'\mathrm{R}S\}} = 1\}$ and
 $\lambda$ 
is the normalization factor equal to $\sum_{{S \in \sS^{(k)}}}\frac{\alpha(\mathrm{R},S)}{|C_{\mathrm{R},S}|}$. 
Trivially, if $g(S)=\frac{1}{|C_{\mathrm{R},S}|}$ 
$F$ returns the proportion of classes 
that a pattern $P$ has in $G$, while 
the standard motifs counting problem arises
for $g(\cdot)=1$ and $1_{\{S'\mathrm{R}S\}}=1$ iff $S' = S$.

Note that,  computing both $\alpha$ and $F$ 
are computationally expensive since they require a sum over all \CISes in $\sS^{(k)}$. 
Our framework overcomes this issue by estimating $\alpha$ and 
$F$ through MCMC sampling (as shown in Section \ref{sec:frame}).

\eat{
\begin{definition} [Isomorphism or Attributed-Isomorphism]
Two (sub)graphs $S$ and $H$ are attributed-isomorphic (or just isomorphic) iff there is a bijective function $\pi$ from $V(S)$ to $V(H)$ such that there is an edge $(v_i,v_j) \in E(S)$ iff $(\pi(v_i),\pi(v_j)) \in E(H)$ and the node labels match: $\forall v \in V(H)$, $\Phi(v,S) = \Phi(\pi(v),H)$. 
\label{def:iso}
\end{definition}
Henceforth, we use the terms {\em isomorphic} and {\em attributed-isomorphic} interchangeably.
\begin{definition}[Pattern]
We define a pattern (or canonical representation)
of a subgraph $S$, $\rho(S)$, as the minimum DFS-code~\cite{gspan} of said graph.
If two subgraphs $S_i$ and $S_j$ are isomorphic, then $\rho(S_i) = \rho(S_j)$.
\label{def:patt}
\end{definition}
}

\noindent 
\textbf{Equivalence-isomorphic relation R.} \label{sec:relations}
Despite the variety of possible relations R, 
this work focuses on a special 
group, which we denote \emph{equivalence-isomorphic} relations.
\begin{definition}
[Equivalence-isomorphic relation] 
We define an equivalence-isomorphic relation $\mathrm{R}$ as satisfying the following properties:
(1) reflexivity: a relation $\mathrm{R}$ is reflexive if for all $S \in \sS^{(k)}$, $S\mathrm{R}S$;
(2) symmetry: a relation $\mathrm{R}$ is symmetric if
for all $S, S' \in \sS^{(k)}$, $S\mathrm{R}S'$ implies
$S'\mathrm{R}S$;
(3) transitivity:
a relation $\mathrm{R}$ is transitivity if
for all $S, S', S'' \in \sS^{(k)}$, $S\mathrm{R}S'$
and $S'\mathrm{R}S''$ implies $S\mathrm{R}S''$;
(4) isomorphic: a relation $\mathrm{R}$ is isomorphic if for all $S, S' \in \sS^{(k)}$,
$S\mathrm{R}S'$ implies that $S$ and $S'$ are attributed-isomorphic (i.e., $\rho(S)=\rho(S')$).
\end{definition}

Equivalence-isomorphic relations have the advantage of 
producing homogeneous classes w.r.t. patterns, where 
a class can be directly assigned to its pattern.
We study two instances of these relations in our experiments: 
(1) pattern percolation and (2) shared hubs (defined in Section~\ref{s:relations}). 

\section{Relation-based Graph Pattern Mining} \label{sec:frame}

This section presents our framework \name 
to compute and estimate $\alpha$ and $F$ for all subgraphs and patterns in the input graph $G$. 
The central idea of \name
is to reduce the number of relation tests ($S\mathrm{R}S'$) by employing sampling methods on a high-order network (HON) of $G$. 

The HON used in our framework, also denoted by $G^{(k)}$,  
has its nodes composed by $k$-node \CISes in $G$, where
 two \CISes have an edge if they share $k-1$ nodes (Def. ~\ref{def:hon}).
\name builds $G^{(k)}$ on the fly for subgraph sampling using MCMC output sampling~\cite{AlHasan:2009,Wang:2014,Guise:2014}. 
This MCMC process is performed through a \emph{random walk} (RW) over $G^{(k)}$, keeping in memory
only a single $k$-HON neighborhood at a time (Def. \ref{def:neigh}).

\begin{definition}[$k$-HON of $G$, or $G^{(k)}$] \label{def:hon}
 A $k$-HON $G^{(k)} = (\sS^{(k)}, E^{(k)})$ 
is a network where 
 $\sS^{(k)}$ composes the set of nodes and 
$E^{(k)}$ represents the set of edges in $G^{(k)}$.  
More specifically, $E^{(k)} = \{(S,S') | 
 S \in \sS^{(k)}, S' \in \sS^{(k)}$ and $|V(S) \cap V(S')| = k-1 \}$.
\end{definition}
\begin{definition}[$k$-HON Neighborhood, $N^{(k)}(S)$]
 The $k$-HON neighborhood 
of a $k$-node subgraph $S$ in $G^{(k)}$ or, simply, $N^{(k)}(S)$,  
 is composed 
  by subgraphs that share k-1 nodes with $S$. Formally, 
 $N^{(k)}(S) = \{S' |  S' \in \sS^{(k)}   
\text{ and } |V(S) \cap V(S'))| = k-1\}$.
\label{def:neigh}
\end{definition}

However, %
existing \CIS sampling methods {\bf cannot}
estimate both $F$ and $\alpha$ (eqs. \eqref{eq:alpha} and \eqref{eq:Fgeneral}) since 
the asymptotic convergence of the estimate $F$ requires an exact computation of $\alpha$. 
Moreover, for some subgraph classes, an exact computation of 
$\alpha$ may be faster than MCMC sampling. 
\name combines an sampling technique with an exact computation algorithm 
achieving benefits from both.

\begin{figure}[ht!!!]
    \centering
    \includegraphics[width=3in,height=1.7in]{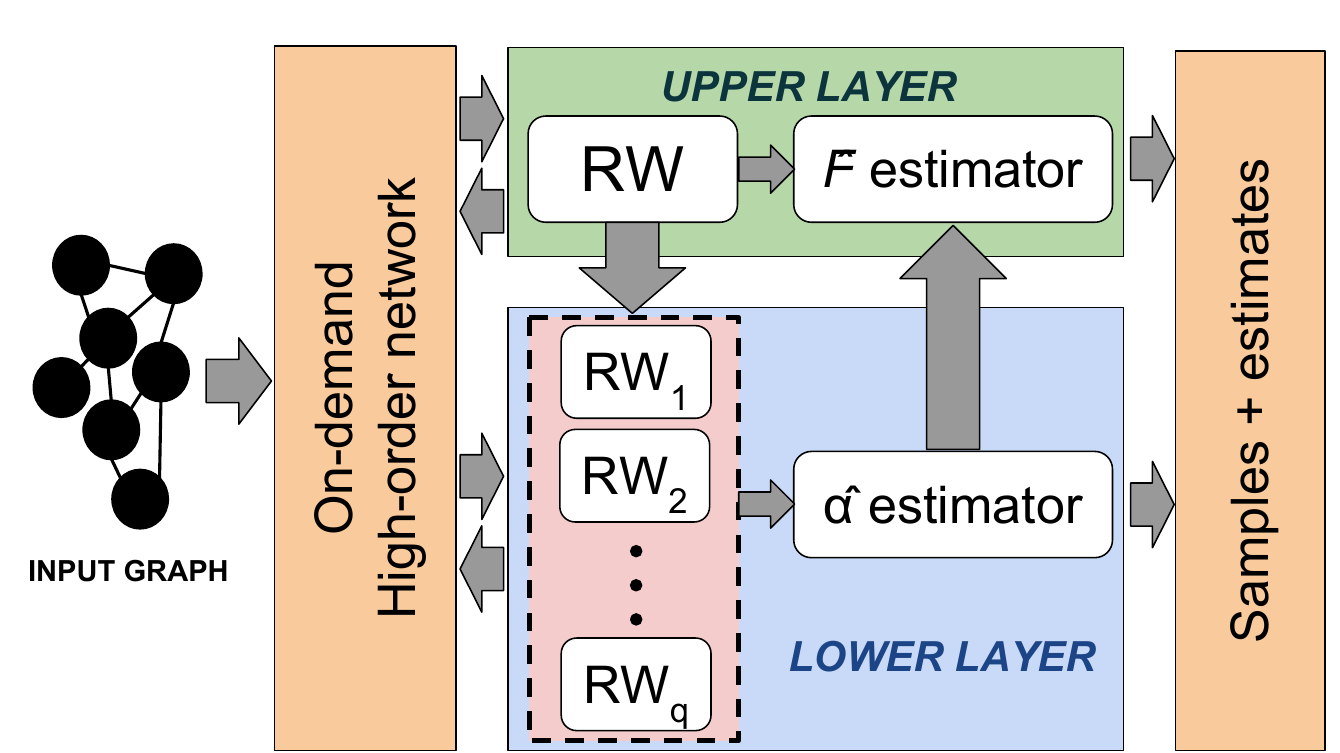}
    \caption{\name framework has two computation layers, the upper and the lower. They
 estimate $\alpha$ and $F$ (eqs. \eqref{eq:alpha} and \eqref{eq:Fgeneral}) in parallel, following a producer-consumer model.}
    \label{fig:frame}
\end{figure}

\noindent \textbf{Framework overview.}
Our parallel computing framework \name has two
 computation layers, denoted upper and lower.
The \emph{upper layer} is responsible for estimating $F$, 
while the \emph{lower} one estimates $\alpha$.
As shown in Figure~\ref{fig:frame},
\name receives a input graph and 
builds on-demand (i.e., when requested by a layer)
the HON $G^{(k)}$.
The upper layer peforms a RW  on $G^{(k)}$
 to sample subgraphs whose classes (and their $\alpha$'s)
will be computed. 

Let $S$ be a subgraph sampled in the upper layer.  
Then, $S$ is  sent 
to the lower layer, which produces a 
{\em finite-sample unbiased} estimate 
of $\alpha(\mathrm{R},S)$ (eq. \eqref{eq:alpha})
from $q$ independent random walks processes on $G^{(k)}$ (depicted 
in Fig. \ref{fig:frame} by the dotted lined block).
The estimated value of $\alpha$ 
is  returned to the upper layer in order to compute $F$ (eq. \eqref{eq:Fgeneral}),
with a consistent estimator.
At the end, the subgraphs sampled by \name and 
the all estimates of $\alpha$ and $F$ are given to the user.

\name parallelizes the tasks from the two computation layers
following a \emph{producer-consumer} model. 
This model fits naturally in our mining process since
the producer and consumer roles can be mapped directly to 
the upper and lower layers, respectively.
In addition, as the RWs procedures in the lower layer 
are independent, they are also computed simultaneously by 
\name.

\subsection{Computing $\mathbf{\alpha}$ (lower layer)} 
\label{s:alphaestimate}

In the lower layer, \name receives a subgraph and 
 it runs an exact computation of $\alpha$, 
limited to a computational budget $B$ w.r.t. the number of subgraphs that our 
framework can generate.
Once budget $B$ is exhausted, \name switches to a 
sampling procedure for estimating $\alpha$. 
One of our main contributions is an unbiased estimator of $\alpha$ 
that leverages the output of the exact computation algorithm in order to 
improve our estimator in terms of both accuracy and efficiency.

\subsubsection{Iteration-bounded exact computation} 
The iteration-bounded exact method is detailed in Algorithm~\ref{alg:exactbfs}, where
we traverse $G^{(k)}$ with a BSF algorithm. Starting from a subgraph $S$, given as parameter, 
our exact method iterates (or visit) on up to $B$ \CISes.
Foremost, 
it starts by setting $\alpha$, $C_{\mathrm{R},S}$, $H$ and $Q$
to their initial values (lines 1-4). 
While there are subgraphs to be visited in the queue $Q$,
we search for unvisited subgraphs (lines 7-15).
If the subgraph under analysis
($S''$) is valid according $h(\cdot)$,
 we add it in the set of visited subgraphs $H$ and in queue $Q$ for 
 further inspection (lines 9-10). 
 Note that $h(\cdot)$ is an optional 
 function that one may define in order to prune the search space 
 and, if $h(\cdot)$ is not given, all 
subgraphs are valid by default. 
The ideal (faster) scenario emerges when $h(\cdot)$ 
restricts the subgraph exploration to only subgraphs in $C_{\mathrm{R},S}$. 
Moreover,  
 $\alpha$ and $C_{\mathrm{R},S}$ are updated if $S''$
is R-related to subgraph $S$ (12-13).
The algorithm returns 
in two cases: (1) if the budget 
for traversing $G^{(k)}$ is finished and $C_{\mathrm{R},S}$ 
is incomplete (lines 14-15) or (2) 
when there is no unchecked subgraph
 in $Q$ and $C_{\mathrm{R},S}$ was completely generated (line 16).

In worst case, the time complexity of Algorithm \ref{alg:exactbfs} is
$O(\min(B,|\sS^{(k)}|)^2)$ when $G^{(k)}$ is dense.
In addition,
the amount of memory necessary to run 
 the exact method is bounded by $O(\min(B,|\sS^{(k)}|) + |V(G)|)$, 
where $|V(G)|$ is the space required to keep $N^{(k)}(S)$ of a certain subgraph $S$ 
for small values of $k$ ($k<10$).

\begin{algorithm}[t!]
\caption{Iteration-bounded Class Computation} \label{alg:exactbfs}
\begin{small}
        \SetAlgoLined
        \SetKwInOut{Input}{input}\SetKwInOut{Output}{output}
        \SetKwRepeat{Do}{do}{while}
        \Input{$G$, a input graph}
        \Input{$S$, a $k$-node subgraph}
        \Input{R, a user-defined subgraph relation}
        \Input{$B$, the maximum number of steps allowed}
        \Input{$g(\cdot)$, a user-defined function to compute $\alpha(C)$ }
        \Input{$h(\cdot)$, a user-defined function to restrict $G^{(k)}$ (optional)}
        \Output{$\alpha$, the weight value of $S$'s class}
        \Output{$C_{\mathrm{R},S}$, the class of $S$ with relation R}
        \Output{$c$, a boolean to say if $C_{\mathrm{R},S}$ is complete.}

\BlankLine
		$\alpha \leftarrow 0$\;
                $C_{\mathrm{R},S}\leftarrow \emptyset$\;
                $H \leftarrow \{ S \} $\;
                $Q$.push($S$)\tcp*{queue of subgraphs} 
                
                \tcc{Verify all non-visited subgraphs.}
                \While { $Q \neq \emptyset$ } {
                	$S' \leftarrow Q$.pop()\;
                    
                    	\For { $S'' \in N^{(k)}(S')) \setminus H$} {
                        	\tcc{Verify if $S''$ must be visited.}
                        	\If {$h(S'') = true$} {
                        		$H \leftarrow H \cup \{ S'' \}$\;
                        	        $Q$.push($S''$)\;
               				\tcc{Update values if related.}
               				\If {$S''\mathrm{R}S = true$} {
	                       			$\alpha \leftarrow \alpha + g(S'')$\;
        	               			$C_{\mathrm{R},S} \leftarrow C_{\mathrm{R},S} \cup \{S''\}$\;
                			}
					\If{$|H| = B$} {\Return{$\alpha$,$C_{\mathrm{R},S}$, \text{false}}\;}
				}
                	}                	
                }
                \Return{$\alpha$,$C_{\mathrm{R},S}$, \text{true}} 
\end{small}
\end{algorithm}

\subsubsection{Estimating $\alpha$ via sampling}
We propose to estimate $\alpha(\mathrm{R}, S)$ 
by performing \emph{random walk tours} (RWTs) on $G^{(k)}$.
A RWT on $G$ is a special type of random walk
that considers the first-return time of the RW Markov chain (i.e., the first time the RW returns to the starting node) to estimate network statistics from sampled nodes~\cite{cooper2016fast, Avrachenkov:2016}.
To the best of our knowledge, 
this is the first work that applies RWTs on high-order networks which is 
 particularly challenge due to the large number of nodes 
 in $G^{(k)}$ ($k$-node subgraphs).
Next, we introduce a variety of innovations and optimizations to deal with this issue.

\begin{itemize}[noitemsep,nolistsep,leftmargin=*]
\item {\it Reusing the exact computation to speed-up return times:} 
The return time (steps) of a RW to the same initial state can already be prohibitively on $G^{(k)}$. %
To speed-up return times, we transform the subgraphs retrieved in an iteration-bounded exact computation of $\alpha$, 
specifically the ones that belong to the class $C_{\mathrm{R},S}$, in a collapsed supernode.
The supernode is a {\em virtual} node of $G^{(k)}$ with a massive number of edges, which speeds-up return times, as the expected return time is inversely proportional to the number of edges of a node~\cite{Avrachenkov:2016}.
We call this collapsed  node a supernode and denote it $\mathcal{I}_S$, and it often refers to it as both as a set and as a {\em collapsed node}. 
\item {\it Non-backtracking random walks:} 
On the other hand, 
very short return times can point to 
a poor mixing of the random walk, where the massive degree of supernode $\mathcal{I}_S$ impedes the exploration of $G^{(k)}$ by the walkers.
To overcome this challenge, we use \emph{non-backtracking random walks} (nRWs). 
The advantage of nRWs over the ordinary RWs is that it avoids resampling recently sampled nodes~\cite{alon2007non,lee2012beyond}, 
ensuring return times greater than two. We denote this approach a \emph{non-backtracking random walk tour} (nRWT).
\end{itemize}

\noindent
{\bf The $\mathbf{\hat{\alpha}}$ estimator:}
Theorem~\ref{thm:unbias} shows an unbiased estimator of $\alpha$ using nRWTs.

\begin{theorem} \label{thm:unbias}
Let $G$ be the input graph,
$\mathrm{R}$ be a user-defined relation and
$g(\cdot)$ a user-defined function. 
Moreover, consider $S$ a subgraph in $\sS^{(k)}$,
$\mathcal{I}_S$ (supernode)
a set of subgraphs related to $S$  
and $\mathcal{T}^r = \{S_1^r, \dots S_{T}^r\}$ 
the sample path of subgraphs visited by the $r$-th RWT on $G^{(k)}$, 
where  $1<r<q$ and
$S^r_i$ is the subgraph reached by the RWT in step $i$.
Because $\mathcal{T}$ is a RWT, $S_1\subseteq \mathcal{I}_S, S_T \subseteq \mathcal{I}_S$ and $S_j \notin \mathcal{I}_S$, $\forall 1 < j < T$. 
Then 
\begin{equation}\label{eq:alphahat}
\begin{aligned}
& \hat{\alpha}(q,\mathrm{R},S) = \\
&  \frac{\sum_{S'\in \mathcal{I}_S} |N^{(k)}(S')\setminus \mathcal{I}_S|}{q} \sum_{\mathclap{r=1}}^{q} \sum_{i = 2}^{|\mathcal{T}^r|-1} 
 \frac{g(S^r_i)  \cdot {\bf 1}_{\{S^r_i\mathrm{R}S\}} }{|N^{(k)}(S^r_i)|} \\
 & + \sum_{\mathclap{S' \in \mathcal{I}_S}} g(S')  \cdot {\bf 1}_{\{S'\mathrm{R}S\}} 
\end{aligned}
\end{equation}

is an unbiased estimator of  $\alpha(\mathrm{R},S)$.
\end{theorem}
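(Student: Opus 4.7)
The plan is to decompose $\alpha(\mathrm{R},S)$ by partitioning $\sS^{(k)}$ into the supernode $\mathcal{I}_S$ and its complement, and argue that the boundary term $\sum_{S'\in\mathcal{I}_S} g(S')\mathbf{1}_{\{S'\mathrm{R}S\}}$ is already captured verbatim by the second sum of \eqref{eq:alphahat}. It therefore suffices to show that the first (tour) summand in \eqref{eq:alphahat} is an unbiased estimator of the residual $\sum_{S'\notin \mathcal{I}_S} g(S')\mathbf{1}_{\{S'\mathrm{R}S\}}$.

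First, I would invoke the Markov-chain renewal-reward identity (Kac's theorem) for the nRWT viewed as a Markov chain on $G^{(k)}$ with $\mathcal{I}_S$ collapsed into a single super-state. Since the nRW's stationary distribution assigns mass proportional to $|N^{(k)}(S')|$ to each $k$-node \CIS (inherited from the uniform measure on directed edges of $G^{(k)}$), a single tour $\mathcal{T}$ returning to $\mathcal{I}_S$ satisfies, for any reward $f\colon \sS^{(k)}\to\mathbb{R}$,
\[
\mathbb{E}\!\left[\sum_{i=2}^{|\mathcal{T}|-1} f(S_i)\right]=\frac{1}{\pi(\mathcal{I}_S)}\sum_{S'\notin \mathcal{I}_S}\pi(S')\,f(S').
\]
I would then choose $f(S')=g(S')\mathbf{1}_{\{S'\mathrm{R}S\}}/|N^{(k)}(S')|$, so that the $|N^{(k)}(S')|$ factor from $\pi(S')$ telescopes with the denominator, leaving $\sum_{S'\notin\mathcal{I}_S} g(S')\mathbf{1}_{\{S'\mathrm{R}S\}}$ up to a multiplicative constant. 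A short bookkeeping step then identifies that constant with the degree of the supernode in the collapsed HON, namely $D_{\mathcal{I}_S}:=\sum_{S'\in\mathcal{I}_S}|N^{(k)}(S')\setminus \mathcal{I}_S|$, which is exactly the prefactor in \eqref{eq:alphahat}.

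Finally, because the $q$ tours $\mathcal{T}^1,\dots,\mathcal{T}^q$ are i.i.d., linearity of expectation gives
\[
\mathbb{E}\!\left[\frac{D_{\mathcal{I}_S}}{q}\sum_{r=1}^{q}\sum_{i=2}^{|\mathcal{T}^r|-1}\frac{g(S_i^r)\,\mathbf{1}_{\{S_i^r\mathrm{R}S\}}}{|N^{(k)}(S_i^r)|}\right]=\sum_{S'\notin\mathcal{I}_S} g(S')\mathbf{1}_{\{S'\mathrm{R}S\}},
\]
and adding the deterministic inside-supernode sum reconstructs $\alpha(\mathrm{R},S)$, establishing \eqref{eq:alphahat}.

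The main obstacle is justifying step one: the renewal-reward theorem is usually stated for the reversible simple RW, whereas here we run a \emph{non-backtracking} RW, whose natural state space is the set of directed edges of $G^{(k)}$ rather than $\sS^{(k)}$ itself. I would handle this by lifting the chain to its directed-edge representation, arguing that on a connected, non-bipartite $G^{(k)}$ without pendant \CISes the lifted chain is irreducible with uniform stationary distribution on directed edges (so that the induced marginal on $\sS^{(k)}$ is proportional to $|N^{(k)}(\cdot)|$), and then pushing the renewal identity through the projection after collapsing $\mathcal{I}_S$. A brief remark is needed for degenerate cases (e.g.\ subgraphs of $G^{(k)}$-degree one) to guarantee that every tour is well defined and that the first non-backtracking step out of $\mathcal{I}_S$, and the last step back in, are compatible with the identity above.
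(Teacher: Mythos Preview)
Your proposal is correct and follows essentially the same route as the paper: decompose $\alpha$ into the supernode contribution plus the residual, invoke the renewal--reward identity (Kac's theorem) together with the degree-proportional stationary distribution of the nRW to show a single tour gives an unbiased estimate of the residual, and then combine the $q$ tours via independence (the paper phrases this as the Strong Markov Property) and linearity of expectation. The only minor difference is that you justify the nRW stationary law by lifting to the directed-edge chain, whereas the paper states the stationary distribution as a separate lemma and handles irreducibility/aperiodicity via a connectivity lemma on $G^{(k)}$; both arrive at the same $\pi(S)\propto |N^{(k)}(S)|$ and the same normalizing constant $\sum_{S'\in\mathcal{I}_S}|N^{(k)}(S')\setminus\mathcal{I}_S|$.
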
%
The proof of Theorem~\ref{thm:unbias} is broken down into multiple parts.
First, we formalize the non-backtracking RW on $G^{(k)}$, and show its steady state is the same as the steady state of a standard RW.
\begin{definition}[nRW on $G^{(k)}$] \label{def:nrw}
A nRW  on $G^{(k)}$ is a 2nd order Markov chain with transition
probability matrix:
\begin{equation}
  \begin{aligned}
 \mathbf{T}(S | S',S'') =
 \begin{cases} 
     0 &  \text{, if } S = S'',  |N^{(k)}(S')| > 1 \\
     1 &  \text{, if } S = S'',  |N^{(k)}(S')| = 1 \\
     \frac{1}{|N^{(k)}(S')|-1} & \text{, otherwise},
 \end{cases}
   \end{aligned}
\end{equation}
   where $S,S',S''$ are $k$-node \CISes in $\sS^{(k)}$.
\end{definition}
\begin{lemma} \label{l:SS}
The steady state distribution $\pi$ of the nRW 2nd order Markov chain given by the transition probability matrix $\mathbf{T}$ of Definition \ref{def:nrw} is:
\begin{equation}
  \begin{aligned}
	\pi(S) = \frac{|N^{(k)}(S)|}{\sum_{S' \in \sS^{(k)}} |N^{(k)}(S')|},
   \end{aligned}
\end{equation}
where $|N^{(k)}(S)|$ is the number of neighbors of $S$ in the $k$-HON $G^{(k)}$, as 
described in Definition \ref{def:neigh}.
\end{lemma}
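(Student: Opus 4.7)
My plan is to prove Lemma \ref{l:SS} by lifting the $2$nd order Markov chain on $\sS^{(k)}$ to an equivalent $1$st order Markov chain on the set of directed edges of $G^{(k)}$, then exhibiting a simple stationary distribution at the edge level and marginalizing back to subgraphs.

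First, I would define the lifted chain: let $\vec{E}^{(k)} = \{(S',S) : S \in N^{(k)}(S')\}$ be the set of ordered pairs of adjacent subgraphs in $G^{(k)}$. The state $(S',S)$ encodes ``the walk is at $S$ having just come from $S'$.'' The transition from $(S',S)$ to $(S,S'')$ is precisely $\mathbf{T}(S''\,|\,S',S)$ from Definition~\ref{def:nrw}, so this lifted chain faithfully represents the nRW as a first-order chain. Any stationary distribution $\mu$ of the lifted chain induces a stationary distribution on subgraphs by the marginal $\pi(S) = \sum_{S' \in N^{(k)}(S)} \mu((S',S))$.

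Next, I would conjecture and verify that $\mu$ is \emph{uniform} on $\vec{E}^{(k)}$, i.e. $\mu((S',S)) = 1/|\vec{E}^{(k)}|$ for every directed edge. The verification amounts to checking the global balance equation at an arbitrary state $(S',S)$:
\begin{equation*}
\mu((S',S)) \;=\; \sum_{S'' \in N^{(k)}(S')} \mu((S'',S'))\,\mathbf{T}(S\,|\,S'',S').
\end{equation*}
I would split this into two cases. When $|N^{(k)}(S')| \geq 2$, the nRW forbids backtracking, so $\mathbf{T}(S\,|\,S'',S')$ is $1/(|N^{(k)}(S')|-1)$ for the $|N^{(k)}(S')|-1$ choices of $S''$ with $S'' \neq S$, and $0$ when $S''=S$; summing gives exactly $\mu$. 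When $|N^{(k)}(S')| = 1$, the only incoming contribution is the forced backtrack from $S''=S$ with probability $1$, again yielding $\mu$. In both cases balance holds, so uniform $\mu$ is stationary.

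Finally I would marginalize: since each subgraph $S$ has exactly $|N^{(k)}(S)|$ incoming directed edges,
\begin{equation*}
\pi(S) \;=\; \sum_{S' \in N^{(k)}(S)} \frac{1}{|\vec{E}^{(k)}|} \;=\; \frac{|N^{(k)}(S)|}{\sum_{S'' \in \sS^{(k)}} |N^{(k)}(S'')|},
\end{equation*}
which is exactly the claimed form. The main obstacle I anticipate is the degree-1 edge case, since the nRW is forced to backtrack there and detailed balance between pairs of edges fails; the right move is to work with \emph{global} balance at the lifted level rather than detailed balance, which cleanly absorbs both cases. A small secondary issue is ensuring uniqueness of $\pi$: this requires the lifted chain to be irreducible on the appropriate communicating class (and aperiodic), which I would state as a mild connectivity/non-bipartiteness assumption on $G^{(k)}$ in line with the standard prerequisites for non-backtracking random walks~\cite{alon2007non,lee2012beyond}.
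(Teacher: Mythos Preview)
Your proof is correct and follows the standard edge-lift argument for non-backtracking walks. It is, however, a genuinely different route from what the paper does. The paper does not actually verify the stationary equation at all: it simply asserts that the nRW second-order chain is irreducible, aperiodic, and Harris recurrent (citing~\cite{Lovasz1996}), and that these properties are inherited from the ordinary random walk on $G^{(k)}$; the connectivity and non-bipartiteness of $G^{(k)}$ are then established separately in Lemma~\ref{l:Conn}. In other words, the paper treats the identity of the stationary law with the degree distribution as a known fact about non-backtracking walks and only supplies the ergodicity prerequisites.

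Your approach buys a self-contained derivation: by lifting to directed edges, guessing the uniform law, and checking global balance in both the $|N^{(k)}(S')|\ge 2$ and $|N^{(k)}(S')|=1$ cases, you actually exhibit why the degree-biased marginal arises, rather than importing it. The paper's approach is shorter and keeps the focus on the structural conditions on $G^{(k)}$, but leaves the reader to consult the non-backtracking literature for the core computation you carry out explicitly. Your remark that uniqueness still requires irreducibility/aperiodicity of the lifted chain is exactly the point the paper handles via Lemma~\ref{l:Conn}, so the two arguments dovetail there.
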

The proof of Lemma~\ref{l:SS} comes from the fact that the 
resulting 2nd order Markov chain is irreducible, aperiodic, and Harris recurrent~\cite{Lovasz1996}, as long as a random walk on $G^{(k)}$ provides such properties.
As $G^{(k)}$ is finite, the chain is trivially Harris recurrent.
Moreover, a Markov chain on $G^{(k)}$ is irreducible and aperiodic since  $G^{(k)}$ is a strongly connected and non-bipartite graph~\cite{Ribeiro2010}, which is shown next.
\begin{lemma}\label{l:Conn}
$G^{(k)}$ is a strongly connected and non-bipartite graph $\forall k \in \{ i | 2 \leq i \leq m-1 \}$
 if (a) $G$  is a strongly connected component and (b)
$G$ has a cycle with $m$ nodes and $m$ is odd.
\end{lemma}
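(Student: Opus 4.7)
The plan is to prove the two properties separately: non-bipartiteness by explicitly exhibiting an odd cycle in $G^{(k)}$, and connectivity by a vertex-swap induction on the symmetric difference of two $k$-CISes. Throughout, I will use that $G$ is connected with $|V(G)| \ge m \ge k+1$, so every $k$-CIS has at least one neighbor outside it, and $G^{(k)}$ has no isolated nodes.

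For non-bipartiteness, let $v_1,\dots,v_m$ be the nodes of the odd cycle $C$ in $G$, indexed in cyclic order. For each $i \in \{1,\dots,m\}$ define the ``sliding window'' $S_i = \{v_i, v_{i+1}, \dots, v_{i+k-1}\}$ with indices taken modulo $m$. Since $k \le m-1$ and these are $k$ consecutive vertices on the cycle $C$, each $S_i$ induces a path (a $k$-CIS). Consecutive windows $S_i$ and $S_{i+1}$ share exactly the $k-1$ vertices $\{v_{i+1},\dots,v_{i+k-1}\}$, so they are adjacent in $G^{(k)}$ by Definition~\ref{def:hon}. A short combinatorial check (using $k<m$) shows all $m$ sets are pairwise distinct, so $S_1\to S_2\to\cdots\to S_m\to S_1$ is a genuine cycle in $G^{(k)}$ of odd length $m$, proving non-bipartiteness.

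For connectivity, I would prove by induction on $d=|V(S)\triangle V(T)|$ that any two $k$-CISes $S,T$ in $G$ can be joined by a path in $G^{(k)}$. The cases $d=0$ and $d=2$ are immediate. For the inductive step, the goal is to produce an intermediate $k$-CIS $S'$ adjacent to $S$ in $G^{(k)}$ with $|V(S')\triangle V(T)| < d$. The strategy is to delete a non-cut vertex $v$ of $S$ (such $v$ always exists, e.g.\ as a leaf of any spanning tree of $S$) chosen from $V(S)\setminus V(T)$, so that $V(S)\setminus\{v\}$ is still connected, and then to add a vertex $u \in V(T)\setminus V(S)$ that is adjacent to $V(S)\setminus\{v\}$ in $G$. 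If such $(v,u)$ exists, $S'=(S\setminus\{v\})\cup\{u\}$ is a $k$-CIS adjacent to $S$ in $G^{(k)}$ and closer to $T$, finishing the induction. When no such direct swap exists, I would invoke the connectivity of $G$ to insert one or more ``stepping-stone'' swaps using vertices outside $V(S)\cup V(T)$: pick a shortest path in $G$ from $V(S)\setminus V(T)$ to $V(T)\setminus V(S)$ that avoids recreating a disconnection, and slide vertices along it; each individual slide decreases a well-chosen potential (distance plus symmetric difference).

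The main obstacle is the connectivity step, specifically guaranteeing that the deletion keeps $S\setminus\{v\}$ connected \emph{and} the insertion keeps $S'$ connected simultaneously, while still making progress toward $T$. The non-cut-vertex argument handles the deletion, and the connectivity of $G$ together with $|V(G)|>k$ handles the insertion, but combining them so that the swap reduces $|V(S)\triangle V(T)|$ requires the stepping-stone construction above when no direct swap is available. The non-bipartiteness part, by contrast, is a clean explicit construction once one notices that a $k$-window on the odd cycle automatically yields a cycle of the same parity in $G^{(k)}$.
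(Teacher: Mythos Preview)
Your non-bipartiteness argument is essentially identical to the paper's: both slide a $k$-window around the odd $m$-cycle to exhibit an odd cycle $S_1\to S_2\to\cdots\to S_m\to S_1$ in $G^{(k)}$.

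For connectivity, however, you actually go further than the paper. The paper's proof stops after constructing the odd cycle in $G^{(k)}$; it never argues that \emph{arbitrary} pairs of $k$-CISes are joined by a path in $G^{(k)}$, so strong connectivity is asserted but not proved there. Your symmetric-difference induction with non-cut-vertex swaps is the standard route to fill this gap, and the outline is sound. The one place that still needs real work is exactly where you flag it: when no direct swap $v\in V(S)\setminus V(T)$, $u\in V(T)\setminus V(S)$ simultaneously (i) keeps $S\setminus\{v\}$ connected and (ii) attaches $u$ to it, you appeal to ``stepping-stone'' swaps and a potential combining path distance and symmetric difference. That potential has to be defined precisely and shown to strictly decrease at every step, since a stepping-stone swap can temporarily \emph{increase} $|V(S)\triangle V(T)|$. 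One clean way to finish is to fix a shortest $G$-path $P$ from $V(S)$ to some target vertex of $V(T)\setminus V(S)$ and show you can always remove a leaf of a spanning tree of $S$ different from $P$'s entry point, so each swap strictly shortens $P$; once $P$ has length~$1$ you are back in the direct-swap case. With that refinement your argument is complete and strictly more detailed than what the paper provides.
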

\begin{proof}
Let $L = \{v_1, \dots, v_m\}$ be the set of nodes that composes a cycle in a strongly connected graph $G$, where
 $|L| = m$, $v_1$ is connected to $v_m$, $v_i$ is connected to $v_{i+1}$
and $m$ is a odd number. As $L$ is connected by definition, we may enumerate \CISes from their nodes. 
Let $L^{(k)}$ be the set of $k$-node subgraphs using \emph{only} the nodes in $L$.
Then,  $L^{(k)}$
has exactly $m$ \CISes, $\forall k \in \{ i | 2 \leq i \leq m-1 \}$. 

We now need to show that these $m$ $k$-node subgraphs generate a cycle in  $G^{(k)}$.
Let  $L^{(k)} = \{S_1, \dots, S_m\}$ where $V(S_i) = \{v_i, \dots, v_{w}\}$
 and $w=i+k-1 \mod m$. For instance, considering $k = 2$ and $m = 3$, we have
$V(S_1) = \{ v_1,v_2 \}$ and $V(S_3) = \{ v_3,v_1\}$.
These $k$-node subgraphs are connected in $G^{(k)}$
whenever they share $k-1$ nodes between them, according to Definition~\ref{def:hon}, i.e.,
$S_i$ is connected to $S_{i-1}$ and $S_{i+1}$ in $G^{(k)}$, $\forall i \in \{ 2 \leq i \leq m-1 \}$.
\end{proof}
Now that we know how nRWs have the same steady state distribution as RWs, the proof concludes with Lemma~\ref{lem:unbias}, showing that we can construct an unbiased estimator of $\alpha$ from the sample path of each nRWT.

\begin{lemma}\label{lem:unbias}
	Let $\mathcal{T}^r$ be the set of sampled nodes in the $r^{th}$ nRWT on $G^{(k)'}$, $r \geq 1$ starting at the supernode $\mathcal{I}_S$. Then,
	\begin{align}\label{eq:unbias}
	\begin{array}{l}
	\mathbb{E} \big[ \sum_{S'\in \mathcal{I}_S} |N^{(k)}(S')\setminus \mathcal{I}_S| \sum_{i=2}^{|\mathcal{T}^r|-1} \frac{g(S^r_i)  \cdot {\bf 1}_{\{S^r_i\mathrm{R}S\}}  }{|N^{(k)}(S^r_i)|} \\ + \sum_{S^\dagger \in \mathcal{I_S}} g(S^\dagger)\cdot {\bf 1}_{\{S^\dagger\mathrm{R}S\}} \big]	
	 = \alpha(\mathrm{R},\mathcal{I}_S)
	\end{array}
	\end{align}
	  
\end{lemma}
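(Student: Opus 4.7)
The plan is to deduce Lemma~\ref{lem:unbias} from the renewal-reward theorem applied to the nRW on a modified HON $G^{(k)'}$ obtained from $G^{(k)}$ by collapsing the set $\mathcal{I}_S$ into a single super-node, retaining all multi-edges that leave $\mathcal{I}_S$ so that transitions outside $\mathcal{I}_S$ (and the effective degrees $|N^{(k)}(S')|$ for $S'\notin\mathcal{I}_S$) are unchanged. By Lemma~\ref{l:SS} the stationary distribution of the nRW on $G^{(k)'}$ equals that of the ordinary RW, and by Lemma~\ref{l:Conn} the chain is irreducible and aperiodic, so Kac's lemma gives the expected first-return time to $\mathcal{I}_S$ as $\mathbb{E}[T]=1/\pi'(\mathcal{I}_S)=2|E'|/D_{\mathcal{I}_S}$, with $D_{\mathcal{I}_S}=\sum_{S'\in\mathcal{I}_S}|N^{(k)}(S')\setminus\mathcal{I}_S|$ and $|E'|$ the edge count of $G^{(k)'}$.

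First I would introduce the per-step reward $f(S')=g(S')\mathbf{1}\{S'\mathrm{R}S\}/|N^{(k)}(S')|$ for $S'\notin\mathcal{I}_S$ and $f\equiv 0$ on $\mathcal{I}_S$. The renewal-reward identity for a tour started at the super-node then reads
$$
\mathbb{E}\Bigl[\sum_{i=2}^{|\mathcal{T}|-1} f(S_i)\Bigr] \;=\; \mathbb{E}[T]\sum_{S'\in\sS^{(k)}}\pi'(S')f(S').
$$
Substituting $\pi'(S')=|N^{(k)}(S')|/(2|E'|)$ for $S'\notin\mathcal{I}_S$, the $|N^{(k)}(S')|$ factors cancel and the $2|E'|$ factors cancel against $\mathbb{E}[T]$, leaving
$$
D_{\mathcal{I}_S}\cdot\mathbb{E}\Bigl[\sum_{i=2}^{|\mathcal{T}|-1} \tfrac{g(S_i)\mathbf{1}\{S_i\mathrm{R}S\}}{|N^{(k)}(S_i)|}\Bigr] \;=\; \sum_{S'\notin\mathcal{I}_S} g(S')\mathbf{1}\{S'\mathrm{R}S\}.
$$

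To finish, I would add the deterministic interior contribution that the exact phase already accounts for. Since by construction $\mathcal{I}_S\subseteq C_{\mathrm{R},S}$, i.e. every $S^\dagger\in\mathcal{I}_S$ satisfies $S^\dagger\mathrm{R}S$, this contribution coincides with $\sum_{S^\dagger\in\mathcal{I}_S} g(S^\dagger)\mathbf{1}\{S^\dagger\mathrm{R}S\}$, so the two pieces together reconstruct the full sum $\sum_{S'\in\sS^{(k)}} g(S')\mathbf{1}\{S'\mathrm{R}S\}=\alpha(\mathrm{R},S)$. Averaging over $q$ i.i.d.\ tours preserves unbiasedness and yields the estimator $\hat\alpha$ of Theorem~\ref{thm:unbias}.

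The main obstacle I expect is justifying renewal-reward for the nRW, which is a 2nd-order Markov chain rather than a standard RW. My approach would be to lift the nRW to a 1st-order chain on ordered edges of $G^{(k)'}$, observe that its stationary distribution is uniform over directed edges and projects back to the node marginal of Lemma~\ref{l:SS}, and verify that first returns to the super-node are genuine renewal epochs in the lifted chain; the reward at step $i$ depends only on the current state $S_i$, so Wald/renewal-reward applies with no modification. A secondary but important bookkeeping point is that collapsing $\mathcal{I}_S$ must preserve multi-edges, which is what makes the degrees $|N^{(k)}(S')|$ appearing in $f$ coincide with the degrees actually seen by the walker.
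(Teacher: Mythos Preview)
Your proof is correct and follows essentially the same route as the paper: renewal--reward on the tour from the supernode, combined with Kac's formula $\mathbb{E}[T]=1/\pi(\mathcal{I}_S)$ and the explicit degree-proportional stationary law, so that the degree factors cancel and the tour contribution recovers $\sum_{S'\notin\mathcal{I}_S} g(S')\mathbf{1}\{S'\mathrm{R}S\}$, with the supernode piece added back deterministically. The paper's own argument rewrites the tour sum via the occupation counts $\mathbb{S}_i^r$ and invokes the identity $\pi(S')=\mathbb{E}[\mathbb{S}^r(S')]/\mathbb{E}[|\mathcal{T}^r|]$, which is exactly your renewal--reward identity stated pointwise.

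Two minor remarks. First, your proposed lift of the nRW to a first-order chain on directed edges is a genuine improvement in rigor: the paper applies renewal--reward to the nRW directly after observing (Lemma~\ref{l:SS}) that the node-marginal steady state matches the ordinary RW, but does not spell out why first returns to $\mathcal{I}_S$ are renewal epochs for a second-order chain; your edge-chain argument closes that gap cleanly. Second, the interior term you derive, $\sum_{S^\dagger\in\mathcal{I}_S} g(S^\dagger)$, matches the estimator in Theorem~\ref{thm:unbias} rather than the literal display in the lemma statement (which carries an extra $1/|N^{(k)}(S^\dagger)|$); this is a typo in the paper, and your version is the one actually needed downstream.
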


\begin{proof}
	
	We start by rewriting Eq. \ref{eq:unbias} as
	
	\begin{align}\label{eq:unbias-new}
	   \begin{array}{l}
	|N^{(k)}(S')\setminus \mathcal{I}_S| \mathbb{E} \big[ \sum_{S'\in \mathcal{I}_S} \sum_{i=2}^{|\mathcal{T}^r|-1} \frac{g(S^r_i)  \cdot {\bf 1}_{\{S^r_i\mathrm{R}S\}}  }{|N^{(k)}(S^r_i)|} \big]	\\
	 = \alpha(\mathrm{R},\mathcal{I}_S) - \sum_{S^\dagger \in \mathcal{I_S}} g(S^\dagger)\cdot {\bf 1}_{\{S^\dagger\mathrm{R}S\}}
	\end{array}
	\end{align}
 	
	With the nRW starting at $\mathcal{I}_S$, we can rewrite the expected value on the left hand side of eq.~\eqref{eq:unbias-new} as
	
	\begin{equation}\label{eq:unbias2}
	\begin{array}{l}
	\mathbb{E}\big[\sum_{i=2}^{|\mathcal{T}^r|-1} \frac{g(S^r_i)  \cdot {\bf 1}_{\{S^r_i\mathrm{R}S\}}  }{|N^{(k)}(S^r_i)|} \big]  =
	\sum_{\mathcal{S}^{(k)} \setminus \mathcal{I}_S } \mathbb{E}\big[\frac{ \mathbb{S}_i^r g(S^r_i)  \cdot {\bf 1}_{\{S^r_i\mathrm{R}S\}}}{|N^{(k)}(S^r_i)|} \big],
	\end{array}
	\end{equation}
	
	\noindent where $\mathbb{S}_i^r$ is the number of times the Markov Chain reaches $S_i$ in the $r^{th}$ tour.
	
	Given a renewal-reward process with inter-reward time distributed as $|\mathcal{T}^r|$, $r \geq 1$ and reward $\mathbb{S}_i^r$, the renewal theorem gives us that
	
	$$\pi(S^r_i) = \mathbb{E}[|\mathcal{T}^r|]^{-1} \mathbb{E}[\mathbb{S}_i^r].$$
	
	\noindent Hence, \ref{eq:unbias2} becomes
	
	\begin{equation}
    \label{eq:unbias3}
    \begin{array}{l}
	 \sum_{i=2}^{|\mathcal{T}^r|-1} \mathbb{E}\big[\frac{ \mathbb{S}_i^r g(S^r_i)  \cdot {\bf 1}_{\{S^r_i\mathrm{R}S\}}}{|N^{(k)}(S^r_i)|}\big] = \\ \sum_{i=2}^{|\mathcal{T}^r|-1} \frac{ \mathbb{E}\big[|\mathcal{T}^r|\big] \pi(S^r_i)  g(S^r_i)  \cdot {\bf 1}_{\{S^r_i\mathrm{R}S\}}}{|N^{(k)}(S^r_i)|}
	\end{array}
	\end{equation}
	
	\noindent Moreover, it follows from Kac's theorem that $\mathbb{E}[|\mathcal{T}^r|] = \frac{1}{\pi(\mathcal{I}_{S})}$. Thus, by levareging Lemma \ref{l:SS} we can rewrite eq. \eqref{eq:unbias3} now as
	
	\begin{equation}\label{eq:unbias4}
	\begin{array}{l}
	\sum_{i=2}^{|\mathcal{T}^r|-1} \frac{ \mathbb{E}\big[|\mathcal{T}^r|\big] \pi(S^r_i)  g(S^r_i)  \cdot {\bf 1}_{\{S^r_i\mathrm{R}S\}}}{|N^{(k)}(S^r_i)|}
	\\ = \frac{1}{\sum_{S'\in \mathcal{I}_S} |N^{(k)}(S')\setminus \mathcal{I}_S|} \sum_{i=2}^{|\mathcal{T}^r|-1} g(S^r_i)  \cdot {\bf 1},
	\end{array}
	\end{equation}

	\noindent and see that since the tours only compute the function outside the supernode, Lemma \eqref{lem:unbias} follows directly from eq. \eqref{eq:unbias2} as we wanted to show.	
\end{proof}

\begin{proof}[Proof of Theorem \ref{thm:unbias} ]
By the Strong Markov Property, each tour is independent of the others. Thus, by Lemma \ref{lem:unbias} and linearity of expectation, Theorem \ref{thm:unbias} holds. 
\end{proof}
\subsection{Estimating $F$ (upper layer)}

The ordinary RW on $G^{(k)}$ can be seen as sampling of subgraph classes, 
where the class $C_{\mathrm{R},S}$ is sampled whenever
$S$ is visited by the MCMC process, for any relation $\mathrm{R}$ of interest.
Thus, we may estimate $F$ (eq.~\eqref{eq:Fgeneral}) using
a simple RW on $G^{(k)}$, with asymptotic bias 
given by the number of \CISes in the classes. 
Fortunately, such bias may be removed using the Horvitz-Thompson estimator~\cite{ribeiro2012estimation}. 
Theorem \ref{the:fapp} gives a consistent estimator of $F$ of eq.~\eqref{eq:Fgeneral}.
\begin{theorem}\label{the:fapp}
Let  $\mathcal{M}_t = \{S_i, \dots, S_t\}$ be 
\CISes  visited by the upper layer RW over $G^{(k)}$ after t > 1 steps.
Consider the estimator
\begin{equation}\label{eq:Fapp2}
\hat{F}(t, q, G, \mathrm{R}, P) = \frac{1}{\lambda} 
\sum_{\mathclap{S \in \mathcal{M}_t}}
\frac{\hat{\alpha_1}(q, \mathrm{R}, S)\cdot {\bf 1}_{\{\rho(S) = P\}}}{\hat{\alpha_2}(q,\mathrm{R},S)}, 
\end{equation} 
where $\hat{\alpha}_1$ estimates $\alpha(\mathrm{R},S)$, and
$\hat{\alpha_2}$ estimates $\sum_{S' \in C_{\mathrm{R},S}} |N^{(k)}(S')|$ (the steady state probability a CIS in $C_{\mathrm{R},S}$ is sampled by the upper layer random walk), $q \propto t$ is the number of tours used in the estimates of $\hat{\alpha}_1$ and $\hat{\alpha}_2$.
and  $
\lambda = \sum_{{S \in \mathcal{M}_t}}\frac{\hat{\alpha_1}(q,\mathrm{R},S)}{\hat{\alpha_2}(q,\mathrm{R},S)}.$
Then, $\hat{F}$ is consistent (asymptotically unbiased), that is,
\begin{equation*}
\lim_{t \to \infty} \hat{F}(t,q,G,\mathrm{R}, P) \overset{a.s.}{=} F(G, \mathrm{R}, P).
\end{equation*}
\end{theorem}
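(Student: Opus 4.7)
The plan is to split $\hat{F}$ into its numerator and the self-normalizing denominator $\lambda$, show that each (after dividing by $t$) converges almost surely as $t \to \infty$, and then conclude by the continuous mapping theorem applied to the ratio.

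First I would invoke the ergodic theorem for the upper-layer walk. Lemma~\ref{l:Conn} together with Lemma~\ref{l:SS} shows that the upper-layer (non-backtracking) RW on $G^{(k)}$ is irreducible, aperiodic, and Harris recurrent, with stationary distribution $\pi(S) = |N^{(k)}(S)|/Z$ where $Z := \sum_{S'' \in \sS^{(k)}} |N^{(k)}(S'')|$. Hence for any bounded $f : \sS^{(k)} \to \mathbb{R}$,
\begin{equation*}
\frac{1}{t} \sum_{S \in \mathcal{M}_t} f(S) \xrightarrow{a.s.} \sum_{S \in \sS^{(k)}} \pi(S)\, f(S).
\end{equation*}
For the numerator the relevant $f$ is $f(S) = \alpha(\mathrm{R},S)\,{\bf 1}_{\{\rho(S)=P\}} / \bigl(\sum_{S' \in C_{\mathrm{R},S}} |N^{(k)}(S')|\bigr)$, which is bounded because $\sS^{(k)}$ is finite and $|g| < \infty$; for $\lambda/t$ the same bounded $f$ works with the indicator removed.

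Second, since $q \propto t$, as $t \to \infty$ we also have $q \to \infty$. Theorem~\ref{thm:unbias} combined with the SLLN over the $q$ i.i.d.\ nRWTs gives $\hat{\alpha}_1(q,\mathrm{R},S) \xrightarrow{a.s.} \alpha(\mathrm{R},S)$, and running the same argument with $g(\cdot) = |N^{(k)}(\cdot)|$ yields $\hat{\alpha}_2(q,\mathrm{R},S) \xrightarrow{a.s.} \sum_{S' \in C_{\mathrm{R},S}} |N^{(k)}(S')|$. Because $\sS^{(k)}$ is finite, pointwise convergence upgrades to uniform convergence in $S$, which justifies substituting the limit ratio into the ergodic average by a Slutsky / bounded-convergence argument. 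Computing the deterministic limit is then a regrouping by class: exploiting that $\alpha(\mathrm{R},\cdot)$ is constant on each class $C_{\mathrm{R},S}$ (by symmetry and transitivity applied to the defining eq.~\eqref{eq:alpha}) and that ${\bf 1}_{\{\rho(\cdot)=P\}}$ is constant on each class (by the isomorphic property), one obtains
\begin{equation*}
\sum_{S \in \sS^{(k)}} \pi(S)\, f(S) = \frac{1}{Z} \sum_{S \in \sS^{(k)}} \frac{\alpha(\mathrm{R},S)}{|C_{\mathrm{R},S}|}\, {\bf 1}_{\{\rho(S)=P\}},
\end{equation*}
since for each class $C$ the ratio $\sum_{S \in C} |N^{(k)}(S)| / \sum_{S' \in C} |N^{(k)}(S')|$ equals one. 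The same calculation applied to $\lambda/t$ produces $Z^{-1} \sum_{S \in \sS^{(k)}} \alpha(\mathrm{R},S)/|C_{\mathrm{R},S}|$, so the common $Z^{-1}$ cancels in the ratio and we recover $F(G,\mathrm{R},P)$ as defined in eq.~\eqref{eq:Fgeneral}.

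The main technical obstacle is the joint limit, since $\hat{\alpha}_1(q,\mathrm{R},S)$ is itself random for every $S \in \mathcal{M}_t$ and could in principle share randomness with the upper-layer walk that visits $S$. I would handle this by declaring the $q$ nRWTs used to estimate $\hat{\alpha}_1$ and $\hat{\alpha}_2$ at each visited $S$ to be drawn independently of the upper-layer trajectory; finiteness of $\sS^{(k)}$ then turns the pointwise-in-$S$ strong consistency into uniform consistency, which combined with $q \propto t$ allows the inner limit (in $q$) to be commuted with the outer ergodic average (in $t$). Without finiteness of $\sS^{(k)}$ or the coupling $q \propto t$, this step would require a more delicate two-scale argument.
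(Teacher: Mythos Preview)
Your proposal is correct and follows essentially the same approach as the paper's proof sketch: both handle the ratio $\hat{\alpha}_1/\hat{\alpha}_2$ by invoking Lemma~\ref{lem:unbias} together with the SLLN over $q \propto t \to \infty$ tours for the inner estimates, and then apply the SLLN for Markov chains (the ergodic theorem for the upper-layer walk) to the outer average. Your write-up is more detailed than the paper's sketch---in particular you make explicit the regrouping-by-class computation that identifies the limit with $F$ and you spell out the uniformity argument over the finite state space $\sS^{(k)}$---but the underlying strategy is the same.
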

\begin{proof}[Proof (sketch)]
The main challenge is to prove that the ratio $\hat{\alpha}_1/\hat{\alpha}_2$ in eq.~\eqref{eq:Fapp2} does not create issues in the asymptotic convergence.
From Lemma~\ref{lem:unbias} we know that $\hat{\alpha}_1$ and $\hat{\alpha}_2$ are both unbiased and i.i.d. for any number of tours $q$.
And we can make an infinite number of tours $q \propto t \to\infty$ because tours are finite a.s.~\cite{Avrachenkov:2016}. Averaging $\hat{\alpha}_1$ and $\hat{\alpha}_2$ over all such tours gives a.s.\ convergence by the Strong Law of Large Numbers (SLLN). 
This means that we can directly use the SLLN for Markov chains~\cite{Ribeiro2010} to establish a.s.\ convergence over the entire estimator.
\end{proof}

\section{Experimental Evaluation}
\label{sec:results}
In this section we evaluate the proposed method 
with real-world networks in two different tasks.
First, we showcase two applications of our approach.
Second, we empirically validate the accuracy of our estimator.

\noindent %
\textbf{Hardware.}
We evaluate \name in the XSEDE Jetstream~\cite{Jetstream1} servers with Intel Xeon E5-2670 CPUs with a
total of 44 execution threads at 2.5GHz per core and 120GB RAM.

\noindent 
\textbf{Datasets.}
We used the following datasets in our evaluation.

\textit{Yeast:} the Yeast protein-protein interaction dataset~\cite{bu2003topological} 
    is a graph, where proteins are nodes, 
an edge means an %
    interaction and
a node
    attribute gives the protein function.
The largest component was
    considered and self loops were excluded, totaling 2224 nodes, 6609 edges and 13 labels. 

\textit{DBLP-$n$:} DBLP~\cite{sun2011co} is a bibliographic temporal network with three node types (authors, venues, and topics) which are connected by the publication of a paper. We define DBLP-$n$, as the DBLP snapshot given by publications between the years $[2001+2n,2001+2n+1]$, for $n=1,2,3$. There are 22412 authors in all 3 networks, with up to 127851 edges.

\textit{Microsoft scientific research (MSR):}  This is a bibliometric network from the Microsoft scientific resesarch dataset~\cite{tang2008arnetminer}. It contains 
authors and their collaborations considering only the data mining community\footnote{ICDE, ICDM, KDD, PAKDD, PKDD, RECSYS, SDM, TKDD, TKDE, VLDB and WSDM.}.
    The largest connected component has 26855
    nodes and 101320 edges. 
We considered the year of
    her first publication to determine the node attribute:
(1) junior, (2) intermediate and (3) senior researcher.

Note that, although the input graphs have 
a relatively small number of nodes (up to 27k), 
their HONs are huge with up to 66M of nodes (subgraphs).

\noindent
{\bf Subgraph relations.} \label{s:relations}
Our experiments consider two subgraph relations.
The first subgraph relation, PERC (Definition \ref{def:perc}), 
generalizes the $k$-clique percolation method~\cite{PalEtAl05}.
Here, a {\em subgraph class} arises from percolating a specific subgraph pattern over the graph. 
Note that our definition uses a general pattern $P$, i.e., it is not restricted to cliques. 
However, %
sparse patterns should be avoided, specially in non-labeled graphs, since 
they may percolate across the whole network, losing meaning. 

The second relation is \emph{shared $d$-hubs} (SH$_d$) (Definition \ref{def:shdn}), 
 where related subgraphs must also share
the nodes whose degrees are larger than a specified threshold $d$.
In fact, SH$_d$  
is particularly interesting when applied on scale-free networks since 
it may be used to reduce the bias 
induced by high-degree nodes in GPM methods.

\begin{definition}[Pattern Percolation, PERC]\label{def:perc}
A subgraph $S$ is related to (or percolates)  $S'$ iff:
 (a) $S = S'$ or
(b) $|V(S) \cap V(S') = k-1$ and 
$\rho(S)=\rho(S')$ 
or  
(c) $S$ percolates $S''$ and $S'$ percolates $S''$.
\end{definition}
\begin{definition}[Shared $d$-Hubs, SH$_d$]\label{def:shdn}
Let $N(v_i)$ be the set of neighbors of the node $v_i$ in 
the input graph $G$ and
$H(S) = \{ v_i \in V(S) | |N(v_i)| \geq d \}$ 
be the set of high degree nodes (hubs) of a subgraph $S$ 
given a threshold $d$. 
Then, two subgraphs $S$, $S'$ 
are related (or equivalent) iff:
(a) $S = S'$ or
(b) $H(S) = H(S')$ and $\rho(S)=\rho(S')$.
\end{definition}

\subsection{Application: motif class counting}\label{s:motifs}
Motif class counting (MCC) is a generalization of the standard motif counting (SMC) problem~\cite{prvzulj2007biological}, where
the frequency of a pattern $P$ is given by 
the number of {\em subgraph classes} a pattern $P$ has in
 the input graph $G$ 
 rather than the total number of matchings of $p$ in $G$.
The rationale behind MCC is  
that subgraphs belonging to a same class 
are equivalent and, then, they should not be counted twice.
The SMC problem can be easily 
 mapped to MCC by setting  $1_{\{S\mathrm{R}S'\}}=1$ iff $S = S'$ and $g(\cdot)=1$. 

We apply MCC with PERC relation on the Yeast graph, 
where the relevance score of a pattern is measured by the proportion of classes it has in $G$, 
obtained from $F$ (eq.~\eqref{eq:Fgeneral}) with $g(S)=\frac{1}{C_{\mathrm{R},S}}$ in eq.~\eqref{eq:alpha}.
We consider only 4-node subgraphs which are quasi-cliques~\cite{liu2008effective} with
density greater than 0.5, i.e., each node is connected to at least other 2 nodes in the subgraph.

In this experiment, we
 consider 4-node subgraphs in quasi-cliques~\cite{liu2008effective} with
density is greater than 0.5, i.e., each node is connected to at least other 2 nodes.
 
To show that the frequent patterns in MCC are different from SMC,
we compare the topology of the top frequent motifs obtained
 by the exact method and MCC in the Yeast graph. 
 More specifically,
  we compute the Kendall's Tau
  correlation between two motifs' rankings:
  one returned by the exact and traditional method and the second ranking from MCC-PERC (converged). 
   The result is given in  Table \ref{tab:rank}, where in consider different ranking' sizes. 
Indeed, we can't reject the hypothesis of the rankings being independent, showing empirically the difference between 
them.
   
\begin{table}[h]
\centering
\begin{tabular}{r|r|r}
\toprule
\textbf{Ranking Size}  & \textbf{Test statistic} & \textbf{p-value}    \\ \midrule
50           & 0.01        & 0.90    \\
100          & 0.00       & 0.95    \\
500          & 0.04         & 0.17    \\
\bottomrule
\end{tabular}
\caption{Comparison between two rankings of patterns returned by SMC and MCC-PERC in Yeast graph through 
Kendall Tau correlation. We reject the hypothesis of the rankings being independent as we increase the ranking' size.}
\label{tab:rank}
\end{table}
	
A closer look also reveals fundamental structural differences between the SMC method 
and MCC. 
Table~\ref{tab:topmotifs} shows topologies 
-- colors represent protein functions -- and relative frequencies ($F$) of the
 {\em top 3} motifs listed by the two approaches evaluated.
  Note that the patterns found by MCC are structurally different, having a greater diversity of
 protein functions (node attributes).

\begin{table}[h!]
\centering
\begin{tabular}
{|c|c@{}c@{}|c@{}c@{}|}
\toprule
\textbf{Rank}  & \multicolumn{2}{|c|}{\textbf{SMC}} & \multicolumn{2}{|c|}{\textbf{MCC-PERC}} \\ 
\midrule
	& $F$ & Motif & $\hat{F}$ & Motif \\ %
1 &  4.15 &      
\begin{minipage}{.07\textwidth}
\includegraphics[scale=0.08]{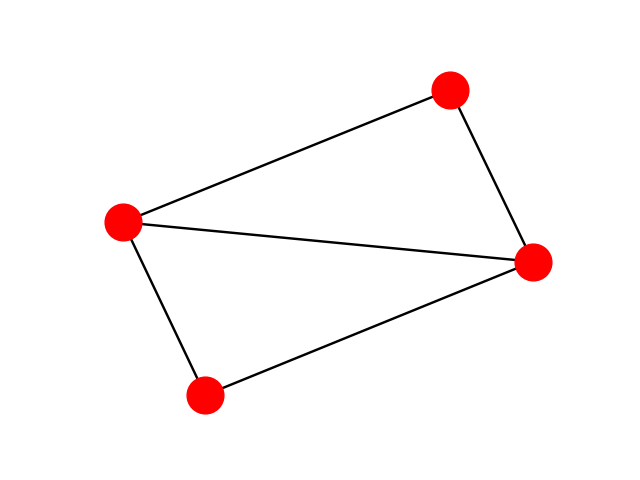} 
\end{minipage}
& 0.58   &   
\begin{minipage}{.07\textwidth}
\includegraphics[scale=0.08]{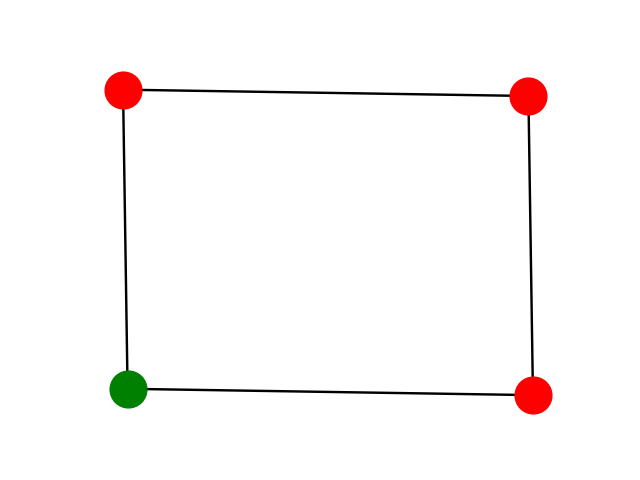} 
\end{minipage}
\\ \midrule
2 &  3.66 &  
\begin{minipage}{.07\textwidth} 
\includegraphics[scale=0.08]{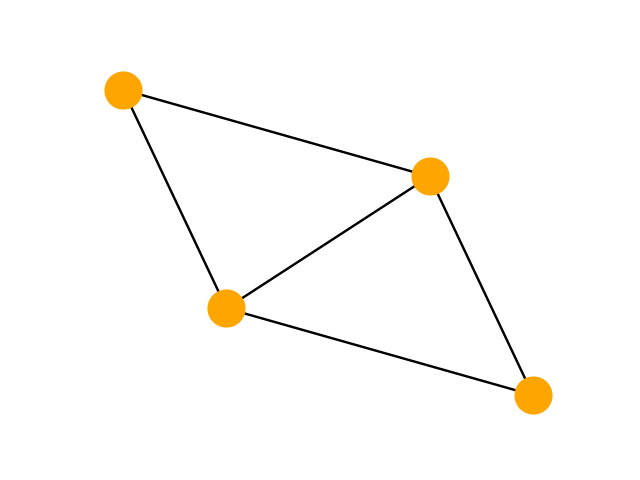}
\end{minipage} 
& 0.56   & 
\begin{minipage}{.07\textwidth} 
\includegraphics[scale=0.08]{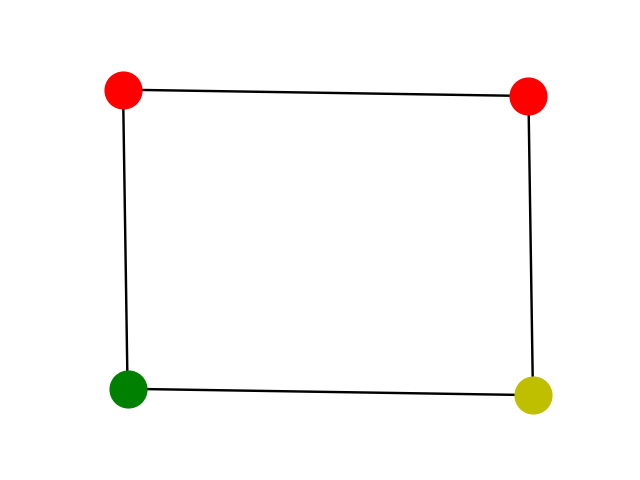} 
\end{minipage}
\\ \midrule
3 &   2.02 & 
\begin{minipage}{.07\textwidth} 
\includegraphics[scale=0.08]{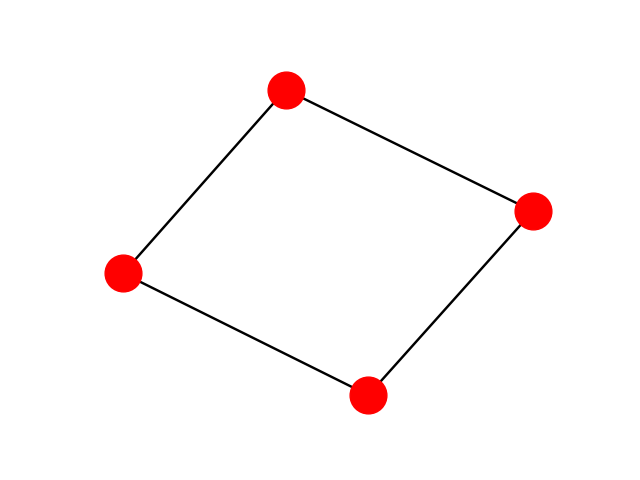}
\end{minipage}
 &  0.41  &  
\begin{minipage}{.07\textwidth} 
\includegraphics[scale=0.08]{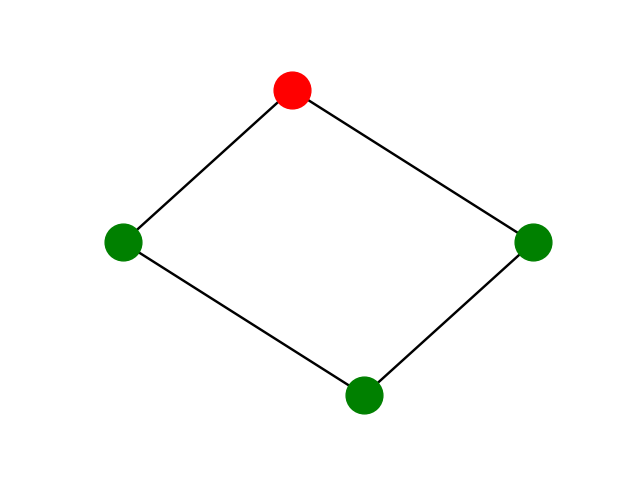} 
\end{minipage}
\\ 
\bottomrule
\end{tabular}
\caption{Comparison between the top 3 most frequent motifs obtained using SMC and MCC-PERC methods in Yeast graph. Indeed, the methods can find different relevant patterns. 
}
\label{tab:topmotifs}
\end{table}

\subsection{Application: Subgraph Prediction}
We now focus on the flexibility of our generalization of GPM by using it to train a deep neural network model that predicts subgraph dynamics on temporal graphs.
For this task, we use the Subgraph Pattern Neural Network (SPNN) model of Meng et al.~\cite{Meng2018}.
Consider a temporal graph $(G_n)_{n=1}^3$.  
Meng et al.\ predicts how CISes (connected induced subgraphs) on $G_1$ will evolve in $G_2$. 
More precisely, the goal of the model is to minimize the negative log-likelihood loss
\begin{equation}
\label{eq:Loss}
\mathcal{L}(G_1; \bW) = - \sum_{C \in \sS_1^{(k)}} \log \text{Pr}(y(S) | S, \bW),
\end{equation}
where $\bW$ are the neural network parameters, $\sS_1^{(k)}$ are the $k$-node CISes of $G_1$ and $y(S)$ is a class label indicating which subgraph $S$ has evolved into on $G_2$. For instance, for $k=3$, $S$ is a triangle or a vee on $G_1$ and $y(S) \in \{0,1\}$ indicates whether $S$ becomes disconnected or not on $G_2$.
Once $\bW$ learned by minimizing eq.~\eqref{eq:Loss}, the model is applied to subgraphs on $G_2$, predicting their evolution (labels) on $G_3$.	
To scalably optimize eq.~\eqref{eq:Loss}, we need to to train the model with stochastic gradient descent, which samples CISes on $G_1$.

Considering that the degree distribution 
of real networks often follows a power law, 
we soon realize that the high-degree hubs on $G_1$ induce a large number of \CISes, which will have a disproportional influence over the objective function in eq.~\eqref{eq:Loss}.
Using \name, practitioners now can reduce this influence by defining a subgraph relation in an alternative loss function to eq.~\eqref{eq:Loss}:
\begin{equation}
\label{eq:LossAlt}
\mathcal{L}'(G_1; \bW) = - \sum_{S \in \sS_1^{(k)}} \alpha(R,S) \log \text{Pr}(y(S) | S, \bW),
\end{equation}
where $\alpha(R,S)$ is a weight of CIS $S$ defined by relation R. 
Using \name, we can efficiently estimate the required gradients of eq.~\eqref{eq:LossAlt}, $\sum_{S \in \sS_1^{(k)}} \alpha(R,S) \nabla_\bW \log \text{Pr}(y(S) | S, \bW)$, since our approach samples $S$ from $\sS_1^{(k)}$ with an asymptotically known bias and $\alpha(R,S)$ is estimated with a finite-sample unbiased estimator. The result is asymptotically unbiased estimates of the gradient that we use in what is known as the Markovian dynamic case of the Robbins-Monro stochastic optimization~\cite{delyon1996general}.

\noindent
{\em Experiment:} We consider the same task defined in Meng et al.\ for the DBLP dataset. Given an author, a venue and a topic, we want to predict whether the author will publish in this venue and in this topic in the next timestep. We consider DBLP as a sequence of 3 graphs (with a span of two years each).
The subgraph relation R is SH$_{100}$. We use 6000 samples for training (evolution from $G_1$ to $G_2$) and 2500 for testing (evolution from $G_2$ to $G_3$).
We use the same hyperparameters as in Meng et al.: 30\% of our training data to perform early stopping, the maximum number of epochs is 6000, learning rate is set to $0.01$, and L2 regularization strength is $0.001$.

\noindent
{\em Results:}
To evaluate the training of SPNN with the estimated class weights $\hat{\alpha}$ in eq.~\eqref{eq:LossAlt}, in the test phase, i.e., predicting how CISes in $G_2$ evolve on $G_3$, we compute the weighted accuracy $$W_{Acc} = \frac{\sum_{S \in \sS_2^{(k)} }\alpha(R,S) {\bf 1}_{\{\hat{y}(S;\bW) = y(S)\}}}{\sum_{S \in \sS_2^{(k)} }\alpha(R,S)},$$ 
where $\hat{y}(S;\bW)$ is the predicted evolution given by the trained model, $\sS_2^{(k)}$ are the $k$-node CISes of $G_2$. 
As $\alpha(R,S)$ is expensive to compute, we estimate $W_{Acc}$ we use the approximation $\alpha(R,S) \approx \hat{\alpha}(100,R,S)$.

We used \name to train the
model against assuming all CISes have the same weight, i.e., $\alpha(R,S) = 1$,
$ \forall S \in \sS_1^{(k)}$, 
i.e., we are biasing against too many CISes that share the same high degree node, compared to the baseline of the original Meng et al.\ method. 
We used $q=100$ for both training and validation.
The resulting average weighted accuracies of an SPNN trained with the inverse bias w.r.t.\ the class sizes of relation SH$_{100}$ (eq.~\eqref{eq:LossAlt}) are 0.64 and 0.69 for the baseline and our method, respectively.
Note that training considering the classes sizes estimations improves our
performance in a scenario where predictions are made per class. We made a
paired t-test over the multiple runs resulting in a p-value of 0.03773, which
we use to argue that our approach has better performance than the baseline.

\subsection{Evaluating the accuracy of the $\hat{F}$ estimator}

\begin{table}[h!]
\centering
\begin{tabular}{c|c|c|c|c|c|c}
\toprule
\textbf{\#Exp.} & \textbf{Graph} & \textbf{Relation} & $\mathbf{k}$& $\mathbf{|I|}$ & \textbf{\#tours} & \textbf{\#samples}\\
\midrule
1 & Yeast	& PERC& 4 &100 &	1k &	100k\\
2 & MSR	& PERC& 4 &10k &	1k &	100k\\
3 & DBLP1	& SHs& 3 & 10k &	1k &	100k\\
\bottomrule
\end{tabular}
\caption{Experiments executed for weight-based motifs counting: datasets, subgraph relations and parameters.}
\label{tab:exp-conv}
\end{table}

Here, we evaluate the quality of our estimator $\hat{F}$ (eq. \eqref{eq:Fapp2}), 
applying MCC (see Sec. \ref{s:motifs}) problem
with two subgraph relations: PERC and SH$_d$ 
(Definitions~\ref{def:perc} and~\ref{def:shdn}, respectively) 
on three datasets (Yeast, DBLP1, and MSR). 
We focused on the three experiments which is described in 
Table \ref{tab:exp-conv}. For Yeast and MSR networks, we consider
 quasi-cliques with 4 nodes with density greater than 0.5 (i.e., each node must be connected to at least two others),  
while all 3-node subgraphs are processed in the DBLP1 graph.
In all cases, the pattern score estimated by $\hat{F}$ is computed using $g(S)=\frac{1}{C_{\mathrm{R},S}}$.

\begin{figure}[h!]
    \centering
    \subfloat[SSE analysis]{\label{fig:sseyeast}
    \includegraphics[width=1.6in, height=1.22in]{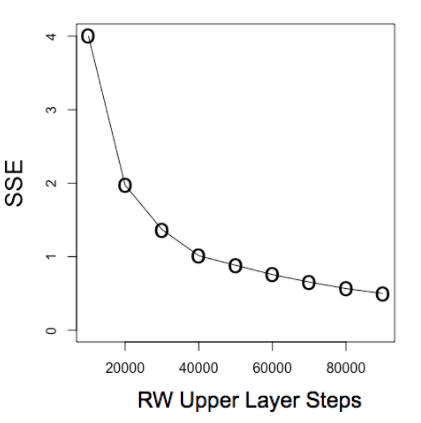}
}
    \subfloat[Top 5 convergence]{\label{fig:baryeast}
    \includegraphics[width=1.9in, height=1.22in]{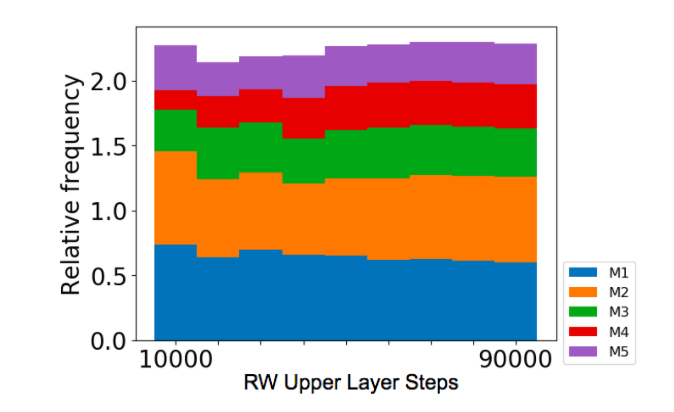}}
    \caption{Exp 1., $\hat{F}$ analysis on Yeast.}
\end{figure}
\begin{figure}[h!]
    \centering
    \subfloat[SSE analysis]{\label{fig:ssemico}
    \includegraphics[width=1.5in,height=1.22in]{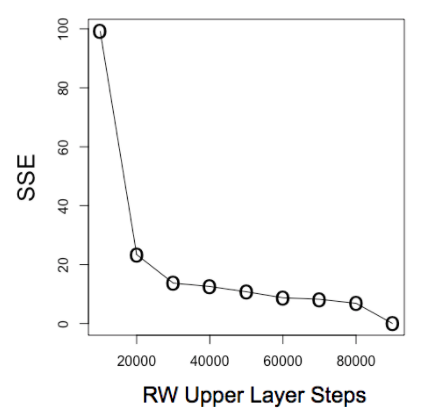} 
    }
    \subfloat[Top 5 convergence]{\label{fig:barmotifs}
    \includegraphics[width=1.8in]{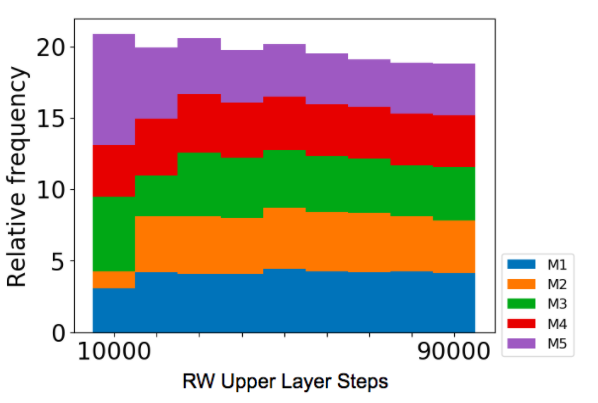}}
    \caption{Exp 2., $\hat{F}$ analysis on MSR.}
\end{figure}
\begin{figure}[h!]
    \subfloat[SSE analysis]{\label{fig:ssedblp1}
    \includegraphics[width=1.5in,height=1.22in]{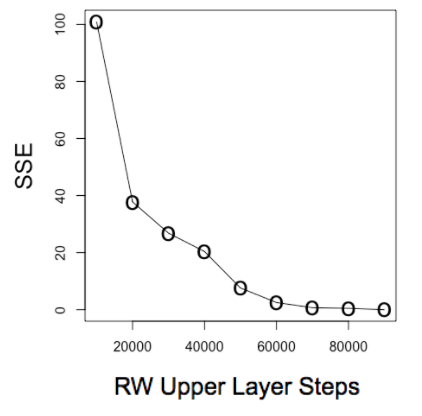}
    }
    \subfloat[Top 5 convergence]{\label{fig:bardblp1}
    \includegraphics[width=1.8in, height=1.22in]{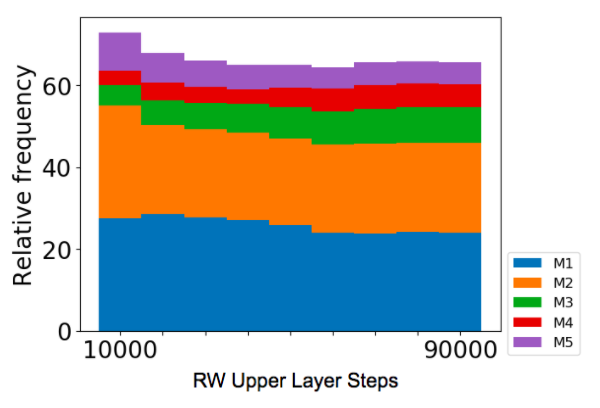}}
    \caption{Exp 3., $\hat{F}$ analysis on DBLP1.}
\end{figure}
\subsubsection{SSE analysis}
First, we will show that our estimator $\hat{F}$ converges to the true value by measuring 
the sum of squared errors (SSE) of the difference between the estimator and the exact value $F$ (eq. \eqref{eq:Fgeneral}) for all the patterns.
The SSE accuracy of {\bf Exp.\ 1} is shown in Figure~\ref{fig:sseyeast}).
The exact computation of eq.~\eqref{eq:Fgeneral} is only possible since the Yeast graph is small. 
The SSE values 
go from 4 to 0.5 after 100k sampled subgraphs (Upper Layer steps), showing that 
our estimator monotonically (and quickly) reduces the estimation error as we sample more subgraphs.
Approximate SSE convergences of {{\bf Exp.~2} and {\bf Exp.~3}
are shown in Figures \ref{fig:ssemico} and \ref{fig:ssedblp1}, respectively. 
In these scenarios, we cannot run the exact algorithm due to
the prohibitive computational cost.
Thus, we calculate $\hat{F}$ (eq.~\eqref{eq:Fapp2}) after
 100k sampled subgraphs and consider it as ground truth.  
The goal is to measure convergence. Again, 
 the SSE values reduces in both test scenarios,
 decreasing from 100 to $0.01$ in both, experiments 2 and 3 (Figs. \ref{fig:ssemico} and \ref{fig:ssedblp1}). 
 This shows that our approach can quickly provide accurate estimates with a relatively few number of subgraph queries.

\subsubsection{Top 5 motifs convergence}
Here, we study
the fluctuation of estimator $\hat{F}$
for the top 5 motifs of MCC, as we collect more samples (Figs. ~\ref{fig:baryeast}, \ref{fig:barmotifs} and \ref{fig:bardblp1}). 
 As we may see,  $\hat{F}$
 converges as we collect more samples, for all the experiments listed in Table \ref{tab:exp-conv}. 
 This result corroborates with our previous
SSE analysis, indicating that the finite-sample bias of our estimator is small.

\subsection{Evaluating the $\hat{\alpha}$ estimator}

We now turn our attention to the accuracy of our estimates of $\hat{\alpha}$ from eq.~\eqref{eq:alphahat} w.r.t.\ the ground-truth value of $\alpha$ in eq.~\eqref{eq:alpha}.
First, we select three representative \CISes $S \in \sS^{(k)}$ 
shown in Figure \ref{fig:subs}, one from
each experiment in table \ref{tab:exp-conv}. 
Then, we perform a detailed evaluation of $\hat{\alpha}$ estimator in each of these experimental scenarios.
Our main goal is to understand the real-world impact of \emph{supernode} size (|$\mathcal{I}_S$|) and \emph{number of tours} $q$ in the accuracy of $\hat{\alpha}$.

\begin{figure}
\subfloat[Yeast's subgraph: this subgraph is composed by proteins with a same biological function]{
\begin{minipage}{.12\textwidth}
\includegraphics[scale=0.13]{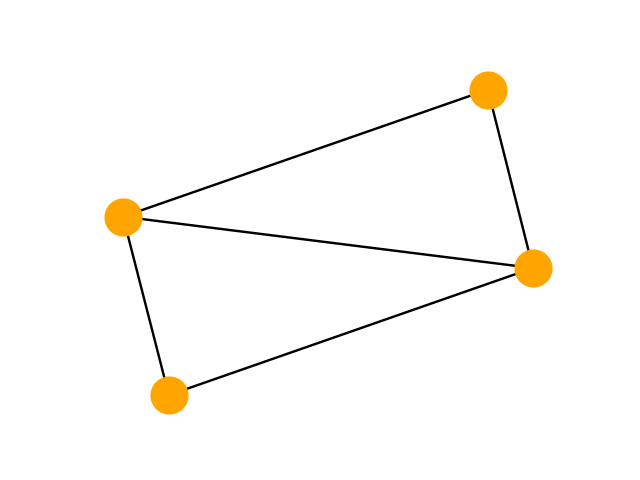}%
\hfill 
\resizebox{0.13\textwidth}{!}{
\begin{minipage}{.13\textwidth}
\begin{tabular}{|l|l|l|}
\hline 
  Node & Degree & Label\\
\hline
898 & 10 & 5 \\ 
968 & 34 & 5 \\
550 & 23 & 5 \\
1046 & 13 & 5 \\
  \hline  
\end{tabular}
\end{minipage}
}
\end{minipage}
\label{fig:sub1}
}\qquad%
\subfloat[MSR's subgraph: the blue label represents authors with senior status]{
\begin{minipage}{.12\textwidth}
\includegraphics[scale=0.13]{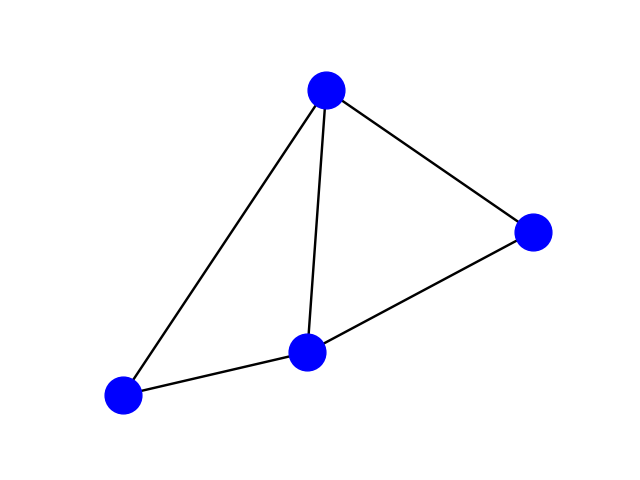}%
\hfill 
\resizebox{0.13\textwidth}{!}{
\begin{minipage}{.13\textwidth}
\begin{tabular}{|l|l|l|}
\hline 
Node & Degree & Label\\
\hline
23819 & 49 & 2 \\
18929 & 134 & 2 \\
12668 & 170 & 2 \\
15181 & 38 & 2 \\
\hline  
\end{tabular}
\end{minipage}
}
\end{minipage}
\label{fig:sub2}
}\qquad
\subfloat[DBLP1's subgraph: the red and blue labels represents author and research topic, respectively]{
\begin{minipage}{.12\textwidth}
\includegraphics[scale=0.13]{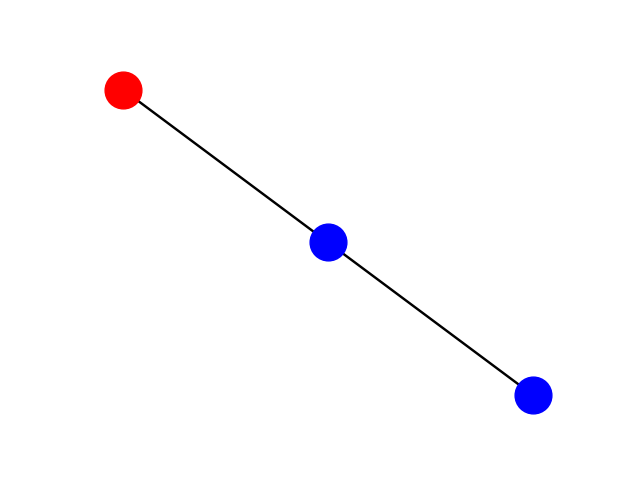}%
\hfill 
\resizebox{0.13\textwidth}{!}{
\begin{minipage}{.13\textwidth}
\begin{tabular}{|l|l|l|}
\hline 
Node & Degree & Label\\
\hline
5018 & 683 & 2\\
4974 & 48 & 2 \\
170  & 24 & 1 \\
  \hline  
\end{tabular}
\end{minipage}
}
\end{minipage}
\label{fig:sub3}
}
\caption{The patterns and other features of the subgraphs considered in our experiments for $\alpha(R,S)$ estimate analysis.} 
\label{fig:subs}
\end{figure}

\begin{figure*}[ht] 
   \centering
   \subfloat[Yeast, Percolation, k=4]{
	\includegraphics[width=2in,height=1.5in]{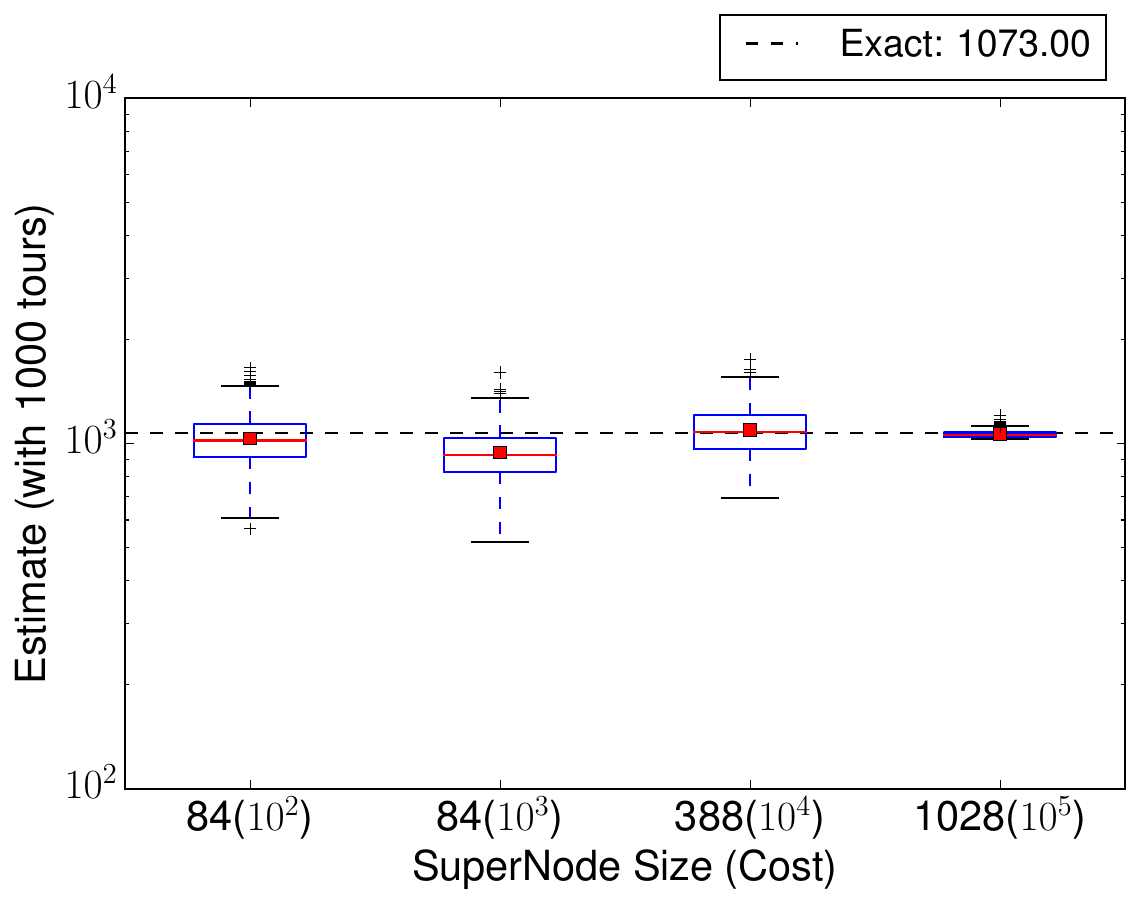}%
   	\label{fig:alphasupernode1}
   }
   ~
   \subfloat[MSR, Percolation, k=4]{
	   \includegraphics[width=2in,height=1.5in]{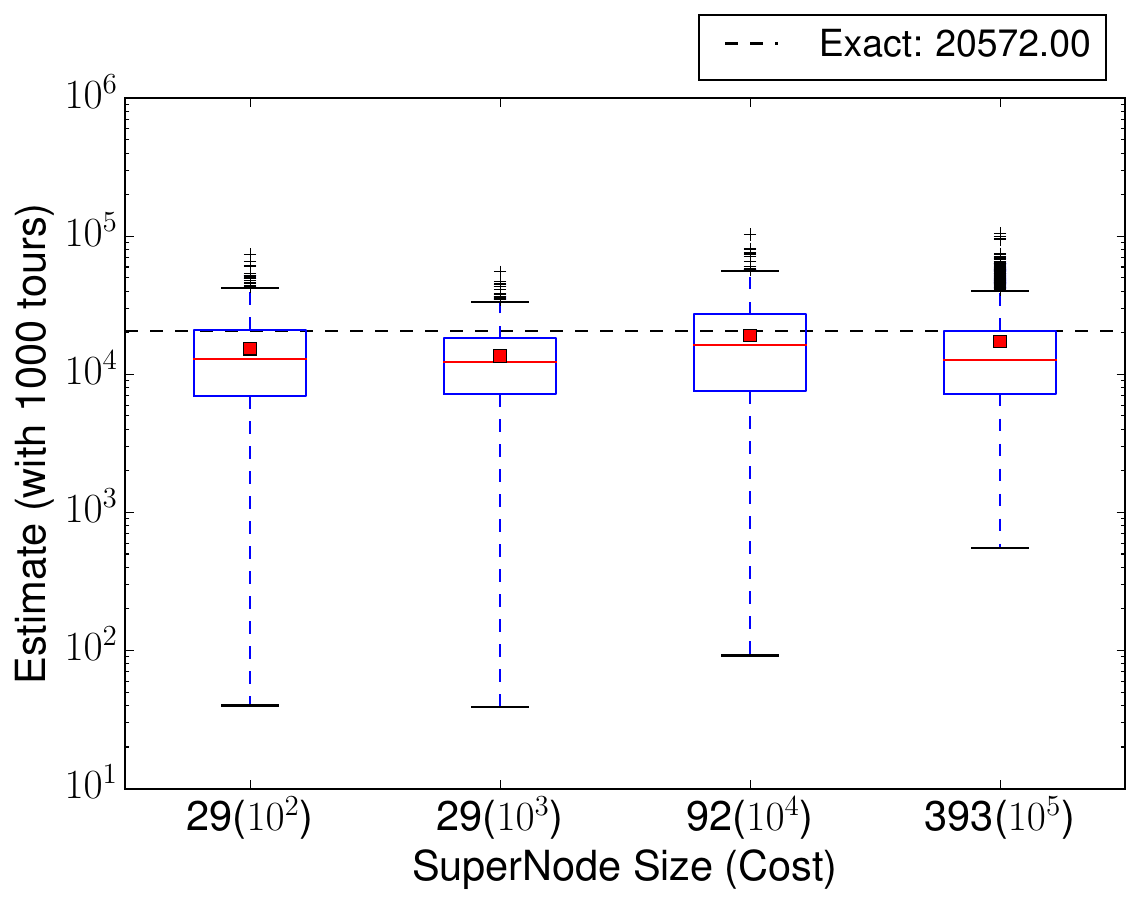}%
   		\label{fig:alphasupernode2}
   }
   ~
   \subfloat[DBLP1, SHDN, k=3]{
	  		   \includegraphics[width=2in,height=1.5in]{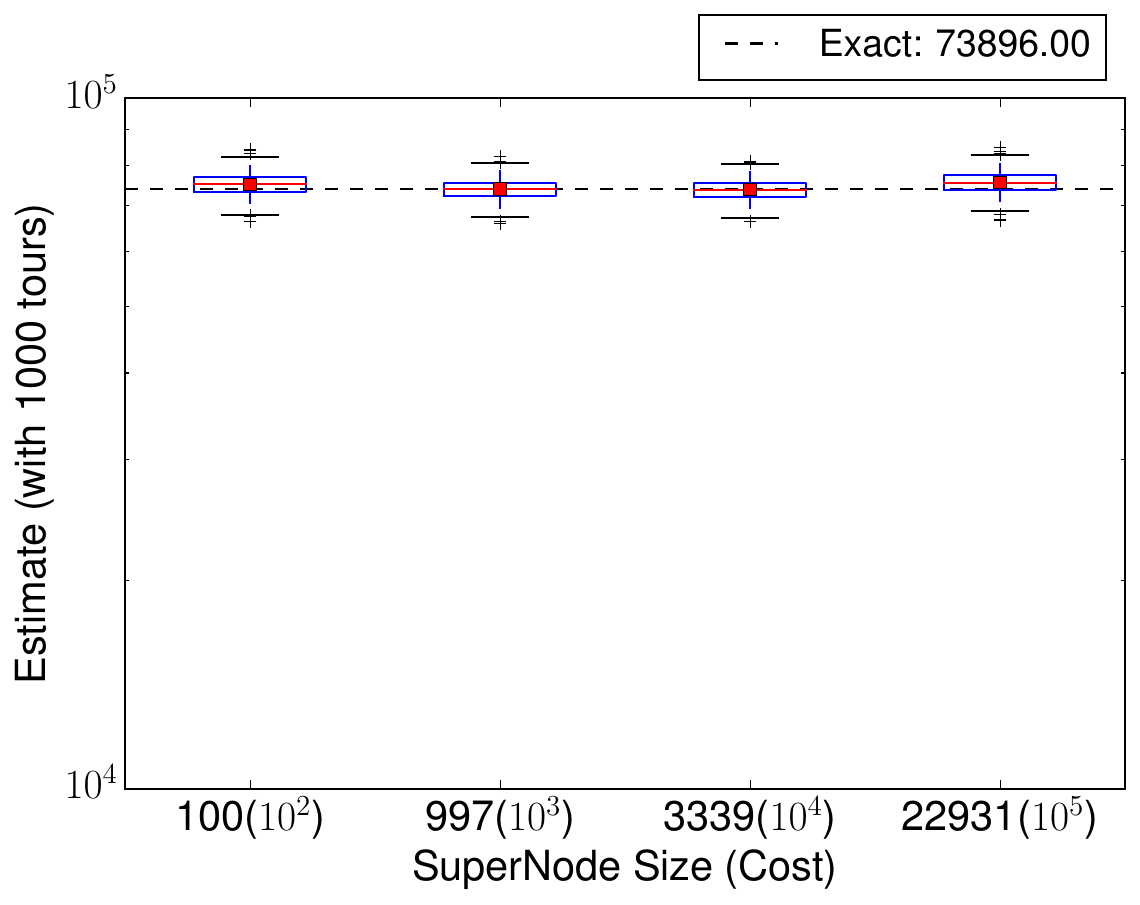}%
   		\label{fig:alphasupernode3}
   }
   \caption{This figure shows the $\alpha_C$ values returned by our algorithm, varying the supernode size.
   Increased supernode size reduces variance in Exp. 1 and 2 (Figs. 5a and 5b), but has almost no effect on Exp. 3 (Fig. 5c).
} 
    	\label{fig:alpha-supernode-size}
\end{figure*}

Figure \ref{fig:alpha-supernode-size} shows
the impact of the supernode size
in the accuracy of estimating $\alpha(R,S)$ for $g(\cdot)=1$ (i.e., we are estimating the class's size, $|C_{\mathrm{R},S}|$). 
The figure shows a boxplot for different supernode sizes and
the vertical dotted lines delimit the exact $\alpha$ (eq. \eqref{eq:alpha}) .
The supernodes were generated using the subgraphs retrieved by 
the iteration-bounded exact algorithm (Alg. \ref{alg:exactbfs}) 
with a budget (described in parentheses). 
Note that, the budget is
 generally larger than $|\mathcal{I}_S|$ 
since the exact algorithm sometimes retrieves 
subgraphs out of the class of interest.
We observe that
larger supernode sizes leads 
to less variance in {\bf Exp.~1} and {\bf Exp.~2}, Figs.\ \ref{fig:alphasupernode1} and \ref{fig:alphasupernode2}, respectively, specially when the budget to 
create them was set to $10^5$, where
we know more elements of the subgraph classes.
On the results for {\bf Exp.\ 3} (Fig. \ref{fig:alphasupernode3}), the error of estimating $\alpha$ is small because the random walks mix fast, 
making the tours shorter for all tested supernode sizes.

\begin{figure*}[ht] 
   \centering
   \subfloat[Yeast, Percolation, k=4]{
	\includegraphics[width=2in,height=1.5in]{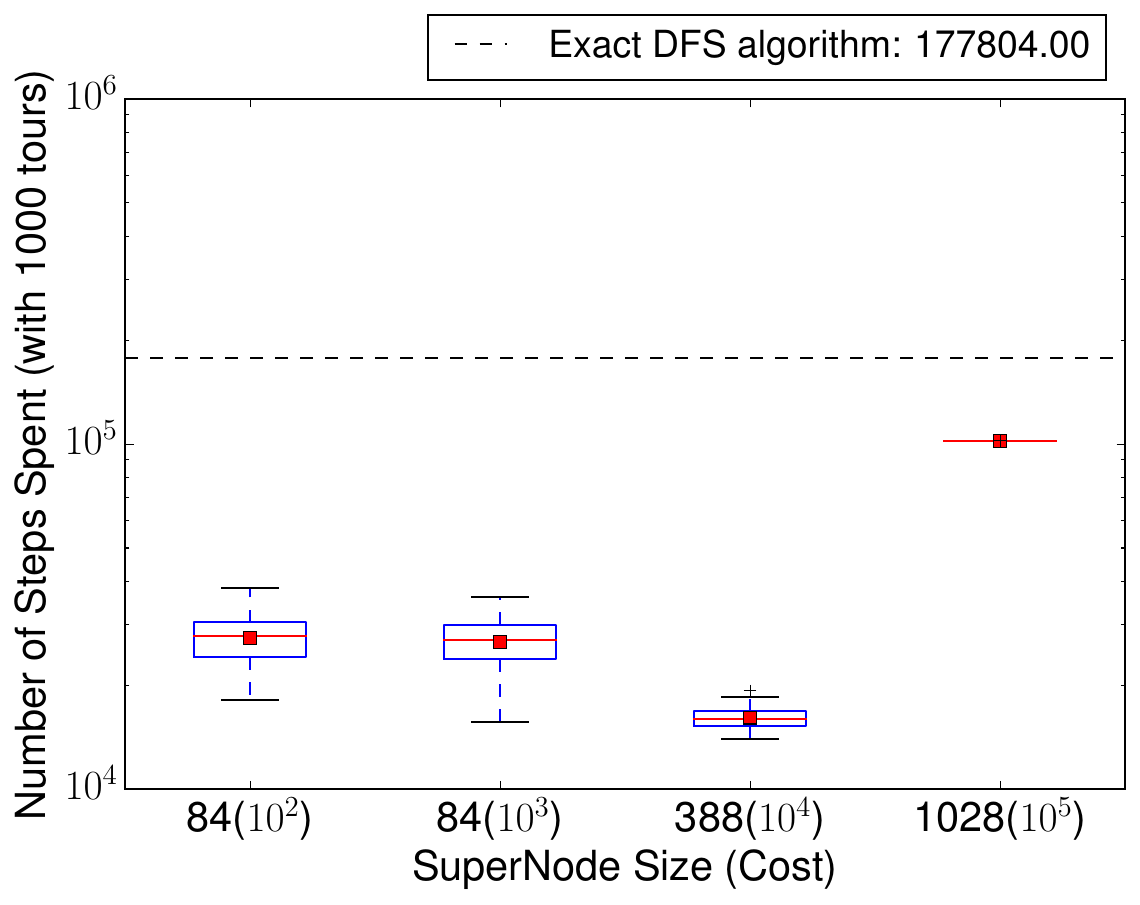}%
   	\label{fig:tlengthsupernode1}
   }
   ~
   \subfloat[MSR, Percolation, k=4]{
	\includegraphics[width=2in,height=1.5in]{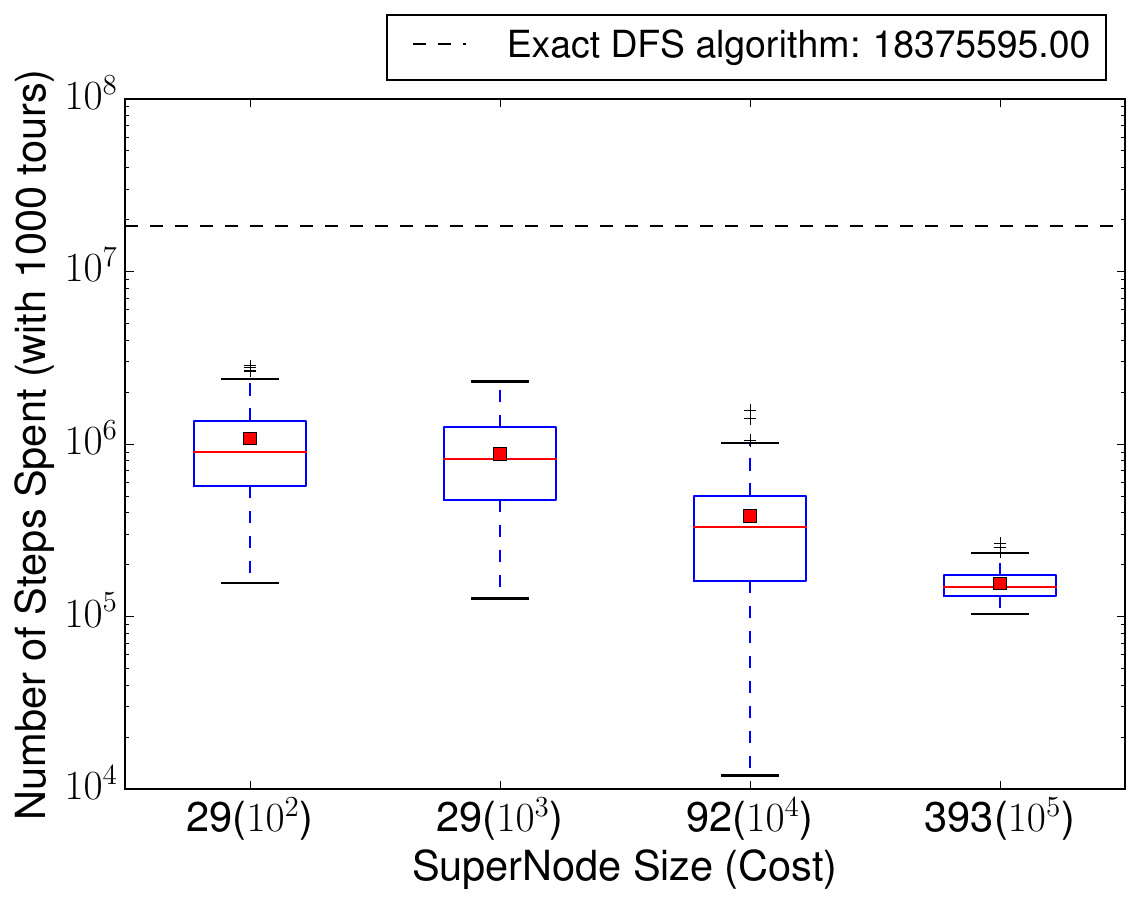}%
   		\label{fig:tlengthsupernode2}
   }
   ~
   \subfloat[DBLP1, SHDN, k=3]{
	  \includegraphics[width=2in,height=1.5in]{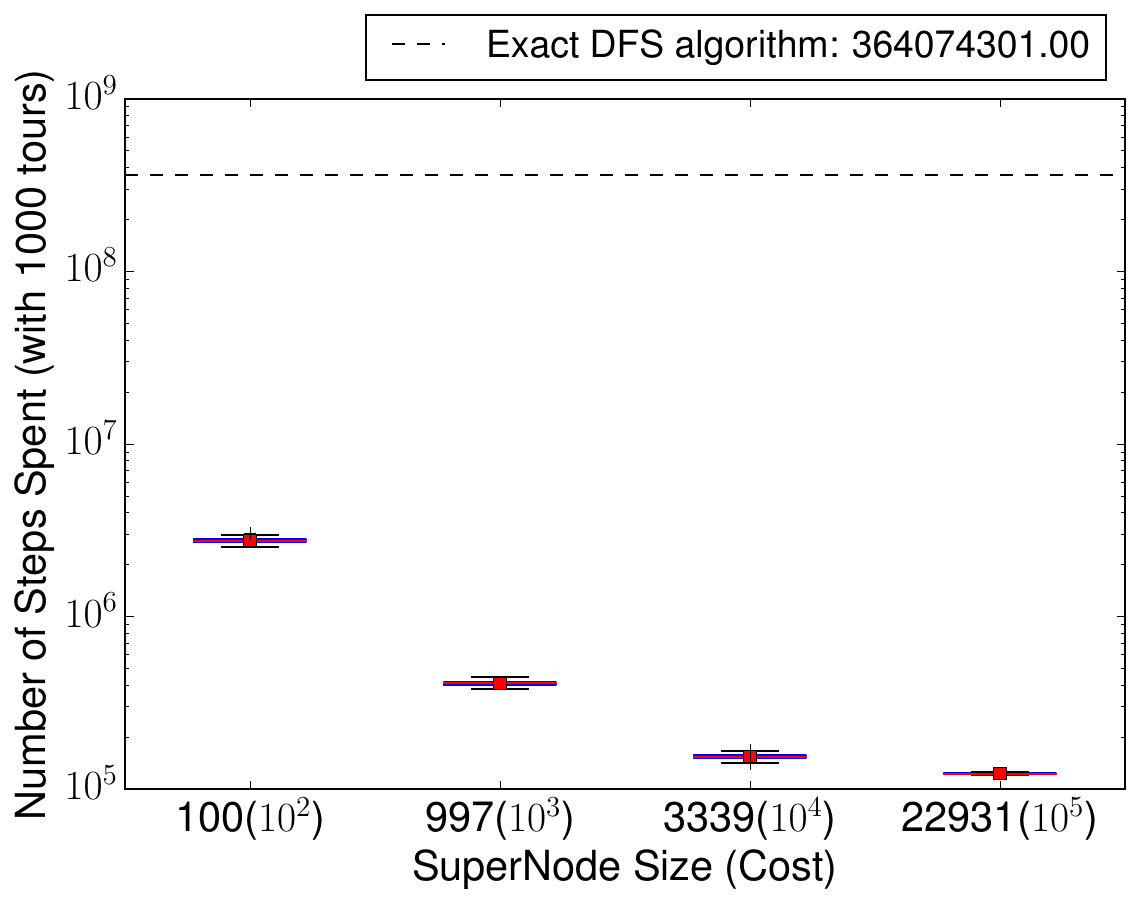}%
   		\label{fig:tlengthsupernode3}
   }
   \caption{
Total cost (sum of RW steps for 1000 tours + iteration-bounded exact computation) to estimate $\alpha(\mathrm{R}, S)$  
for different supernode sizes. 
Larger supernodes boost the efficiency of our method since much less steps are performed in the RW tour. 
The dashed line shows the effort needed to compute the exact BFS-based algorithm.
    } 
   	\label{fig:tlength-supernode}
\end{figure*}

Theoretically, we know that a larger supernode implies shorter tours in average, as the average tour length is inversely proportional to the total number of edges in the supernode. Thus, the initial cost of having spent more computation in the iteration-bounded exact algorithm leads to a larger supernode, which may pay-off if we can get shorter tours. This behavior is seen in Figure~\ref{fig:tlength-supernode}, which measures the total number of subgraph queries needed to complete 1000 tours in our estimator $\hat{\alpha}$ accounting for the exact algorithm retrieves budget $B \in \{10^2,10^3,10^4,10^5\}$. 
The vertical dotted line shows the budget that the exact algorithm required to compute $\alpha(R,S)$ exactly.
Note that use a larger budget in the exact algorithm  
often pays-off in reducing the total number, but of course there is a limit where it start being counter-productive.
Also note that our estimators is generally between one and six order of magnitude faster than the exact computation.

We now study the trade-off between $\hat{\alpha}$'s accuracy and the number of tours. Figure~\ref{fig:alphatour2} shows the estimate of $\alpha(R,S)$ when the supernode is fixed 
1k iterations (steps)
and the number of tours varies between 10 and 1k. In fact,  
we may see that the estimate of $\hat{\alpha}$ not only improves when we use more tours,
but also it converges to true value (as shown by the boxplots).
For practitioners, we advise to test the estimator variance for different number of tours in their 
specific applications: a diminishing return in variance reduction indicates a good number of tours.

\begin{figure*}[ht] 
   \centering
   \subfloat[Yeast, Percolation, k=4]{
	\includegraphics[width=2in,height=1.5in]{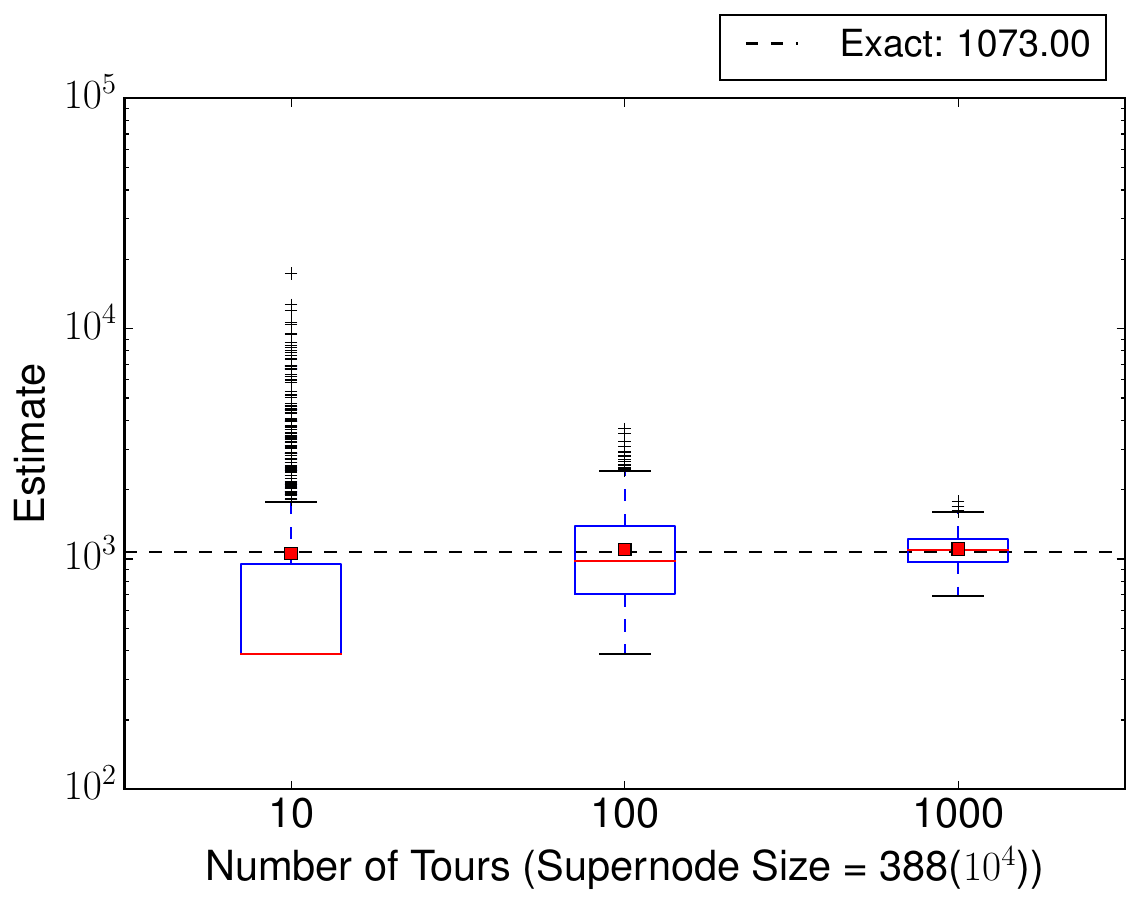}%
   	\label{fig:alphtour1}
   }
   \subfloat[MSR, Percolation, k=4]{
	   \includegraphics[width=2in,height=1.5in]{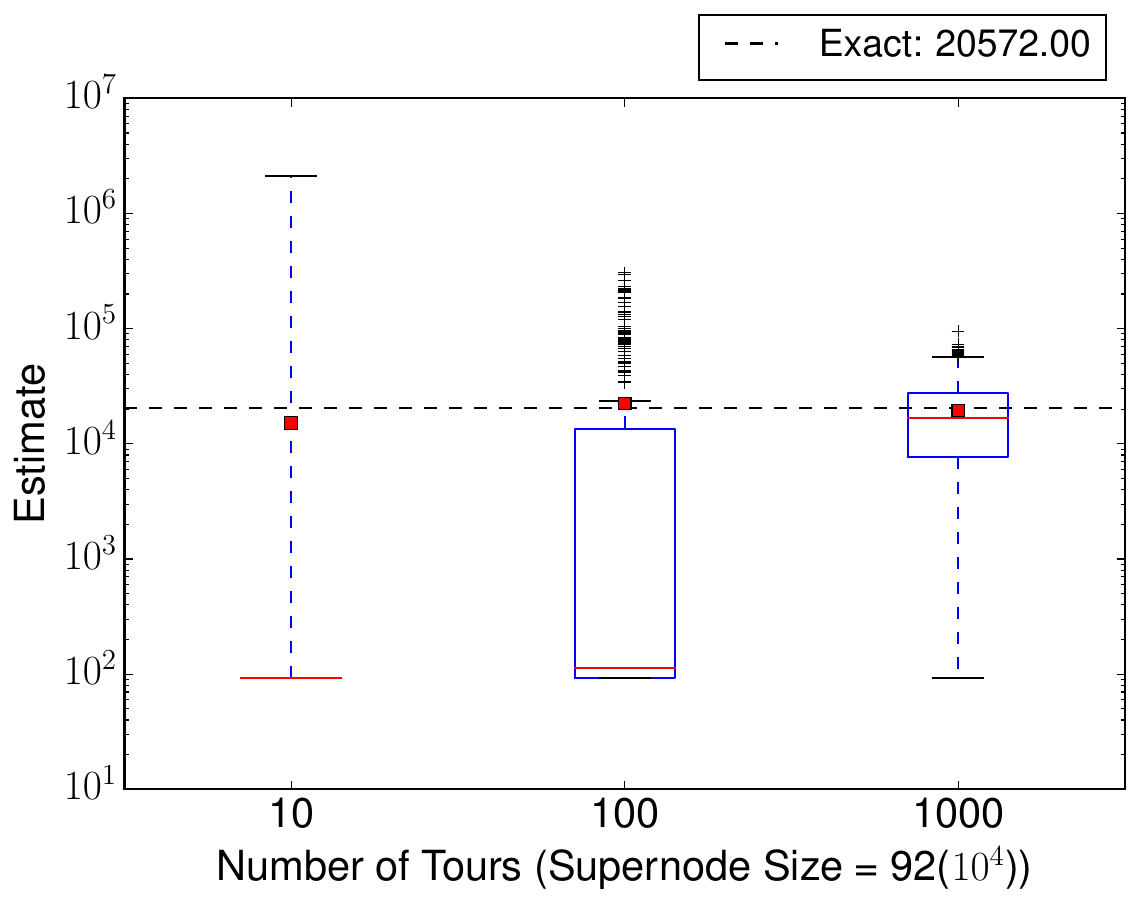}%
   		\label{fig:alphatour2}
   }
   \subfloat[DBLP1, SHDN, k=3]{
	   \includegraphics[width=2in,height=1.5in]{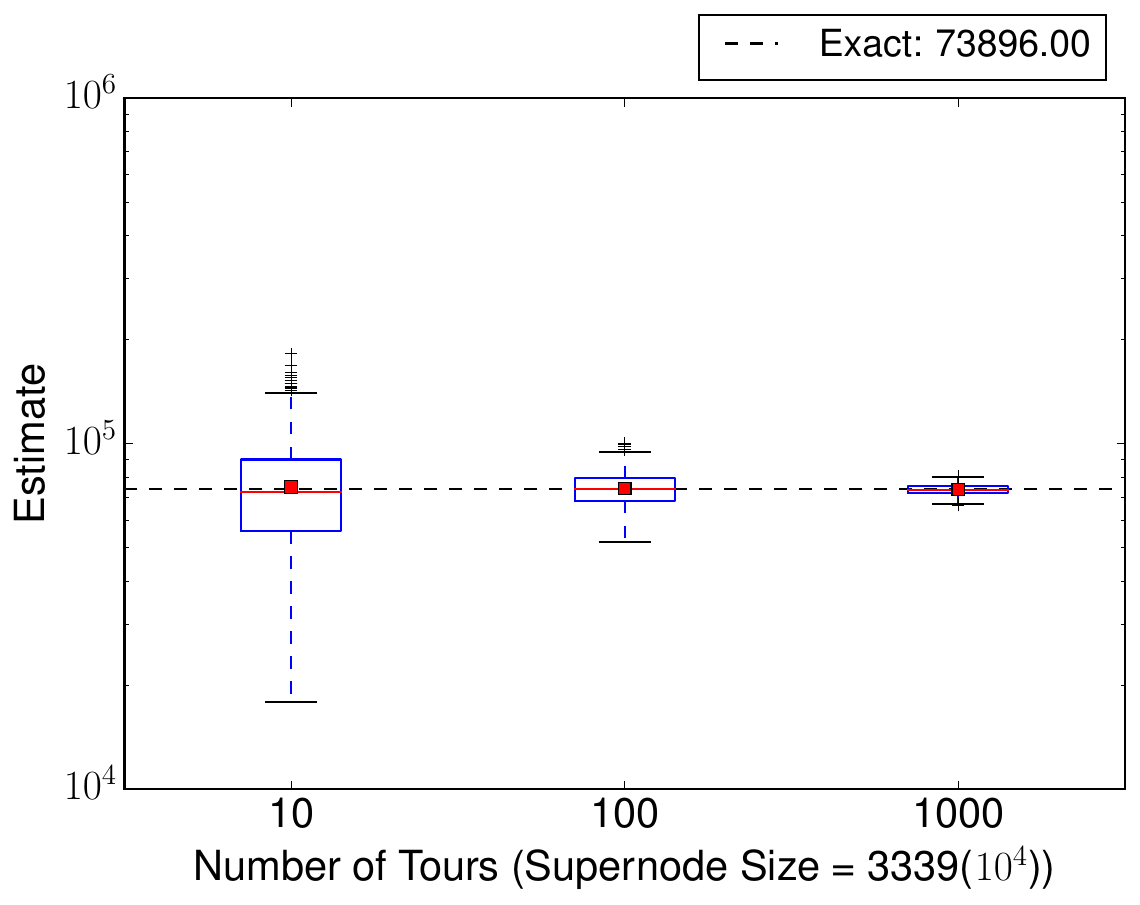}%
   		\label{fig:alphatour3}
   }
   \caption{Accuracy evaluation of $\alpha_C$, varying the number of tours performed. The budget to build the supernode set was fixed to 1k iterations (steps).
   As expected, the variance reduces as the supernode size increases, improving the quality of the estimator.} 
	\label{fig:alpha-tour-size}
\end{figure*}

\subsection{Scalability and efficiency}
We assess the
efficiency and 
scalability of \name from two perspectives.
First, we calculate the cost 
to compute $\alpha$ for a given class $C_{\mathrm{R},S}$
-- in terms of the
number of steps -- for both the exact DFS method, \name and the best case scenario. 
The later may be represented by an oracle algorithm which
 enumerates only the subgraphs that are really necessary ($C_{\mathrm{R},S}$). 

\begin{table}[!htb]
\centering
\begin{tabular}{r|rrrr} \toprule
Exp. & \multicolumn{1}{c}{\#Tours}  & \#Steps/Tour  &    \#Steps (Total)   & Speedup  \\ \midrule
\parbox[t]{2mm}{\multirow{4}{*}{\rotatebox[origin=c]{90}{Exp. 1}}}
&10.00&26.97$\pm$0.93&369.65&481.00x\\
&100.00&27.30$\pm$0.30&2830.45&62.82x\\
&1000.00&27.44$\pm$0.09&27536.12&6.46x\\
\cline{2-5}
& \textit{Best \#Steps} & 1073
& \textit{DFS \#Steps} & 177804\\
\midrule \midrule
\parbox[t]{2mm}{\multirow{4}{*}{\rotatebox[origin=c]{90}{Exp. 2}}} 
&10.00&832.0$\pm$94.16&9320.08&1971.61x\\
&100.00&846.92$\pm$30.69&85691.88&214.44x\\
&1000.00&860.88$\pm$9.61&861880.03&21.32x\\
\cline{2-5}
& \textit{Best \#Steps} & 20572
& \textit{DFS \#Steps} & 18375595\\
\midrule \midrule
\parbox[t]{2mm}{\multirow{4}{*}{\rotatebox[origin=c]{90}{Exp. 3}}} 
&10.00&411.46$\pm$2.85&5114.57&71183.73x\\
&100.00&409.43$\pm$0.89&41942.52&8680.32x\\
&1000.00&409.48$\pm$0.28&410475.19&886.96x\\
\cline{2-5}
& \textit{Best \#Steps} & 73896
& \textit{DFS \#Steps} & 364074301 \\ 
\bottomrule
\end{tabular}
\caption{
Comparison between our approach, DFS solution and the best case scenario in terms of the number of steps.
}
\label{tab:performance}
\end{table}

Table \ref{tab:performance} shows a
 comparison between the algorithms in {\bf Exp.~2} (with similar results for Exp.\ 1 and 3).
Our approximate solution provides 
significant computational savings (up to 3 orders-of-magnitude), outperforming the exact method in running time. 
 As expected, the performance gap between these algorithms decreases
as we increase the number of tours used to estimate $\alpha(R,S)$. There is an ``accuracy vs. efficiency'' trade-off, since the estimate gets more accurate as
we increase the number of tours or supernode size.
%

Table \ref{tab:run} shows the execution time 
and respective speedups of \name on 
the three datasets. In all of them, we compute in parallel the statistics for 
10k subgraphs (i.e., the number of subgraphs sampled in the upper lower RW).
 The remaining configurations are the same used in the previous experiments (Table \ref{tab:exp-conv}).
As we may realize, 
\name provides a near-linear speedup in all of our tests, showing the benefits 
of our parallelization strategy. The super-linear behaviour of 
\name in the Yeast graph is explained by the graph being small enough to fit in the memory cache.

\begin{table}[hbtp]
\centering
\begin{tabular}{ l|r|r|r|c}
\toprule
\multirow{2}{*}{\textbf{Dataset}} & \multicolumn{3}{|c|}{\textbf{\#cores}} & \multirow{2}{*}{\textbf{Speedup}} \\ \cline{2-4} 
   & \multicolumn{1}{|c|}{11} & \multicolumn{1}{|c|}{22} & \multicolumn{1}{|c|}{44} &
\\ \midrule 
  Yeast 
& 5m6s
& 2m36s
& 1m27s
& x3.5
\\ \midrule
  MSR  
&338m49s
&176m32s
&125m54s
& x2.7
\\ \midrule
DBLP1 
& 1002m16s
& 522m26s
& 322m19s
& x3.1
\\ \bottomrule
\end{tabular}
\caption{\name's running time analysis}
\label{tab:run}
\end{table}
\section{Related Work}\label{s:related}

Existing methods for estimating induced subgraph statistics of graphs are focused on simple subgraph counting tasks. 
For instance,
 edge-based algorithms sample edges and combine 
 them to estimate subgraphs counts~\cite{ahmed2014graph} and
triangles~\cite{jha2013space}.
There are drawbacks to these methods: (a) the inability to consider larger subgraphs, 
(b) the cost to remove the bias of sampling a particular edge and, (c) the need of having access to the entire network.
Methods based on random walks were
 proposed to address these challenges.

There are RW solutions focused on estimating simple graph statistics~\cite{Ribeiro2010, hardiman2013estimating}.
In a complementary direction, Hasan and Zaki~\cite{AlHasan:2009} proposed a MCMC 
algorithm that works on chain of subgraph patterns 
from a input graph, where their goal was 
sample a subset of interesting patterns uniformly.
Wang et al.~\cite{Wang:2014} 
and Bhuiyan et al.~\cite{Guise:2014} develop two sampling methods 
to deal with motifs and graphlet counting, 
 using RWs and high-order networks. Such solutions 
 differ in the way they remove the bias from sampling:
 one uses Horvitz-Thompson estimator and the second employs a Metropolis-Hastings algorithm.

Recently,
MCMC methods sample subgraphs by walking 
in the actual graph structure
rather than high-order networks~\cite{han2016waddling}.
For that, they keep not only the the most recent visit node, 
 but the last $k$ nodes visited by the RW. 
Subgraphs are 
 derived from these nodes and 
  the bias of sampling them are computed according  
 pre-established equations.
Although walking 
in the original graph may be faster than 
 to perform MCMC in a high-order network, 
these methods are not amenable to the subgraph relationships and classes introduced in our work.

\section{Conclusions}

This paper introduces the concept of subgraph relations and 
its applications to graph pattern mining and learning problems.
Relations generalize traditional GPM problems and they can aid the analysis of subgraph patterns.
In particular, we saw that (1) subgraph relations can help reduce learning biases associated with locally subgraph-dense regions (e.g., high-degree nodes) and (2) subgraph classes may provide 
 novel and interesting analysis on motifs in graphs.
Finally, we show that our proposed hybrid exact-sampling estimator 
is consistent, accurate and significantly faster than the exact approach alone.

\noindent \newline
{\bf Acknowledgements.} 
This work was partially supported by 
CNPq, CAPES, Fapemig,  INCT-Cyber, MasWeb, EuBra-BigSea, Atmosphere
and by the ARO, under the U.S. Army Research 
Laboratory award W911NF-09-2-0053.
Any opinions, findings and 
conclusions or recommendations expressed in this material are those of 
the authors and do not necessarily reflect the views of ARO.

\balance
\bibliographystyle{IEEEtran}
\bibliography{references}

\end{document}